\newtheorem{Thm}{Theorem}[subsection]
\newtheorem{Lem}[Thm]{Lemma}
\newtheorem{Cor}[Thm]{Corollary}
\theoremstyle{definition} 
\newtheorem{Def}[Thm]{Definition}
\newtheorem{Rem}[Thm]{Remark}
\begin{document}

\title[Article Title]{On the approximation capability of GNNs in node classification/regression tasks}


\author*[1]{\fnm{Giuseppe Alessio} \sur{D'Inverno}}\email{dinverno@diism.unisi.it}

\author[1]{\fnm{Monica} \sur{Bianchini}}\email{monica.bianchini@unisi.it}

\author[1]{\fnm{Maria Lucia} \sur{Sampoli}}\email{marialucia.sampoli@unisi.it}

\author[1]{\fnm{Franco} \sur{Scarselli}}\email{franco.scarselli@unisi.it}

\affil*[1]{\orgdiv{Department of Information Engineering and Mathematics}, \orgname{University of Siena}, \orgaddress{\street{Via Roma 56}, \city{Siena}, \postcode{53100}, \state{(SI)}, \country{Italy}}}




\abstract{Graph Neural Networks (GNNs) are a broad class of connectionist models for graph processing. Recent studies have shown that GNNs can approximate any function on graphs, modulo the equivalence relation on graphs defined by the Weisfeiler--Lehman (WL) test. However, these results suffer from some limitations, both because they were derived using the Stone--Weierstrass theorem -- which is existential in nature --, and because they assume that the target function to be approximated must be continuous.  Furthermore, all current results are dedicated to graph classification/regression tasks, where the GNN must produce a single output for the whole graph, while also node classification/regression problems,  in which an output is returned for each node, are very common.
In this paper, we propose an alternative way to demonstrate the approximation capability of GNNs that overcomes these limitations. Indeed, we show that GNNs are universal approximators in probability for node classification/regression tasks, as they can approximate
any  measurable function that satisfies the  1--WL equivalence on nodes.
The proposed theoretical framework allows the approximation of generic discontinuous target functions and also suggests the GNN
architecture that can reach a desired approximation. In addition, we provide a bound on the number of the GNN layers required to achieve the desired degree of approximation, namely $2r-1$, where $r$ is the maximum number of nodes for the graphs in the domain.}

\keywords{GNN, approximation, node-focused, 1--WL test, unfolding trees}



\maketitle

\section{Introduction}
Graph processing is becoming pervasive in many application domains, such as social networks, Web applications, biology and finance. Intuitively, graphs allow to represent patterns along with their relationships.
Indeed, graphs can naturally encode high--valued information that is hard to represent with vectors or sequences, the most common data structures used in Machine Learning (ML). 
Graph Neural Networks (GNNs) are a class of machine learning models that can process information represented in the form of graphs. In recent years, the interest in GNNs has grown rapidly and numerous new models and applications have emerged~\cite{wu2020}.  The first GNN model was introduced  in~\cite{GNN}. Later, several other approaches have been proposed, including Spectral Networks~\cite{bruna2013}, Gated Graph Sequence Neural Networks~\cite{li2015gated},
Graph Convolutional Neural Networks~\cite{kipf2016},
GraphSAGE~\cite{hamilton2017inductive}, Graph attention networks~\cite{GAT}, and Graph Networks~\cite{Battaglia2018}.
However, despite the differences among the various GNN models, 
most adopt the same computational scheme, based on a local aggregation mechanism.
The information related to a node is stored into a feature vector, which is updated recursively by aggregating the feature vectors of neighboring nodes. 
After $k$ iterations,
the feature vector of a given node $v$ captures both structural information and attributes
of the nodes in the $k$--hop neighborhood of $v$. At the end of the learning process, the node feature vectors can be used to classify or to cluster the objects/concepts represented by a (some) node(s), or by the whole graph. 

Recently, a great effort has been devoted to  study   the 
expressive power of GNNs~\cite{sato2020s}. Such a  theoretical property has an important 
impact in machine learning, since it defines what are the applications that can be faced by a neural 
network model, it can explain observed limitations in experiments and, finally, it can suggest novel advancements to improve the considered model. 
In GNNs, the capabilities and the limitations of the model primarily depend on the local computational framework, since GNNs can take into account both the connectivity and the features of the neighboring nodes,
but they may not be able to distinguish between nodes having similar neighborhoods. Therefore, a fundamental question is to define which graphs (nodes) can be distinguished by a GNN, i.e. for which input graphs (nodes) the GNN produces different encodings. In~\cite{xu2018powerful}, GNNs are proved to be as powerful as the Weisfeiler--Lehman graph isomorphism test (1--WL)~\cite{leman1968}. 
Such an algorithm allows to test whether two graphs are isomorphic or not~\footnote{It is worth noting that the 1--WL test is inconclusive, since there exist pairs of graphs that the test recognizes as isomorphic even if they are not.}. The 1--WL  algorithm is based on a graph signature which is obtained by assigning a color to each node, where the graph coloring is achieved by iterating a local aggregation function.
More generally, there exists a hierarchy of algorithms, called  1--WL, 2--WL, 3--WL, etc.,
 which recognizes larger and larger classes of graphs.
It has been shown that a GNN can simulate the 1--WL test, provided that a sufficiently general aggregation function is used, but the basic GNN model cannot implement  higher order tests~\cite{morris2019}. Consequently, the  1--WL test characterizes both the expressiveness and limitations of GNNs, defining the classes of graphs/nodes that GNNs can distinguish.

Another important aspect is the study of the approximation capability of GNNs.
Formally, in
node classification/regression tasks, a GNN implements a function $\varphi(\mathbf{G},v)\rightarrow \mathbb{R}^m$
that takes in input a graph $\mathbf{G}$ and returns an output at each node. Similarly, in graph classification/regression tasks, a GNN implements a function $\varphi(\mathbf{G})\rightarrow \mathbb{R}^m$.
In both cases, the objective is to define which classes of functions can be approximated by a GNN.

In~\cite{Comp_GNN}, the  approximation capability of the original GNN  model (OGNN), namely the first GNN model to be proposed, has been studied using the concept of unfolding trees and unfolding equivalence. The unfolding tree $\mathbf{T}_v$,
with root node $v$, is constructed by unrolling the graph starting from $v$ (see Fig. \ref{fig:unfolding1}).
Intuitively, $\mathbf{T}_v$ exactly describes the information used by the GNN at node $v$
and can be employed to study the expressive power of GNNs in node classification/regression tasks.
The unfolding equivalence is, in turn,  an equivalence relationship defined between nodes having the same unfolding tree.  In~\cite{Comp_GNN}, it was proved that OGNNs can approximate in probability, up to any degree of precision, any measurable function $\tau(\mathbf{G},v)\rightarrow \mathbb{R}^m$  that respects the unfolding equivalence, namely, that produces the same outputs on equivalent nodes.
Currently, unfolding trees --- also termed {\em computation graphs}~\cite{garg2020generalization} --- are widely used to study the GNN expressiveness.
Universal approximation results have been proved  for  Linear Graph Neural Networks~\cite{azizian2020expressive,maron2018invariant}, Folklore Graph Neural Networks~\cite{maron2019provably} and, more generally, for  a large class of  GNNs~\cite{xu2018powerful,azizian2020expressive}
that includes most of the recent architectures, also considered in this paper.

\begin{figure*}[ht]
\centering
 \includegraphics[width=.8 \linewidth]{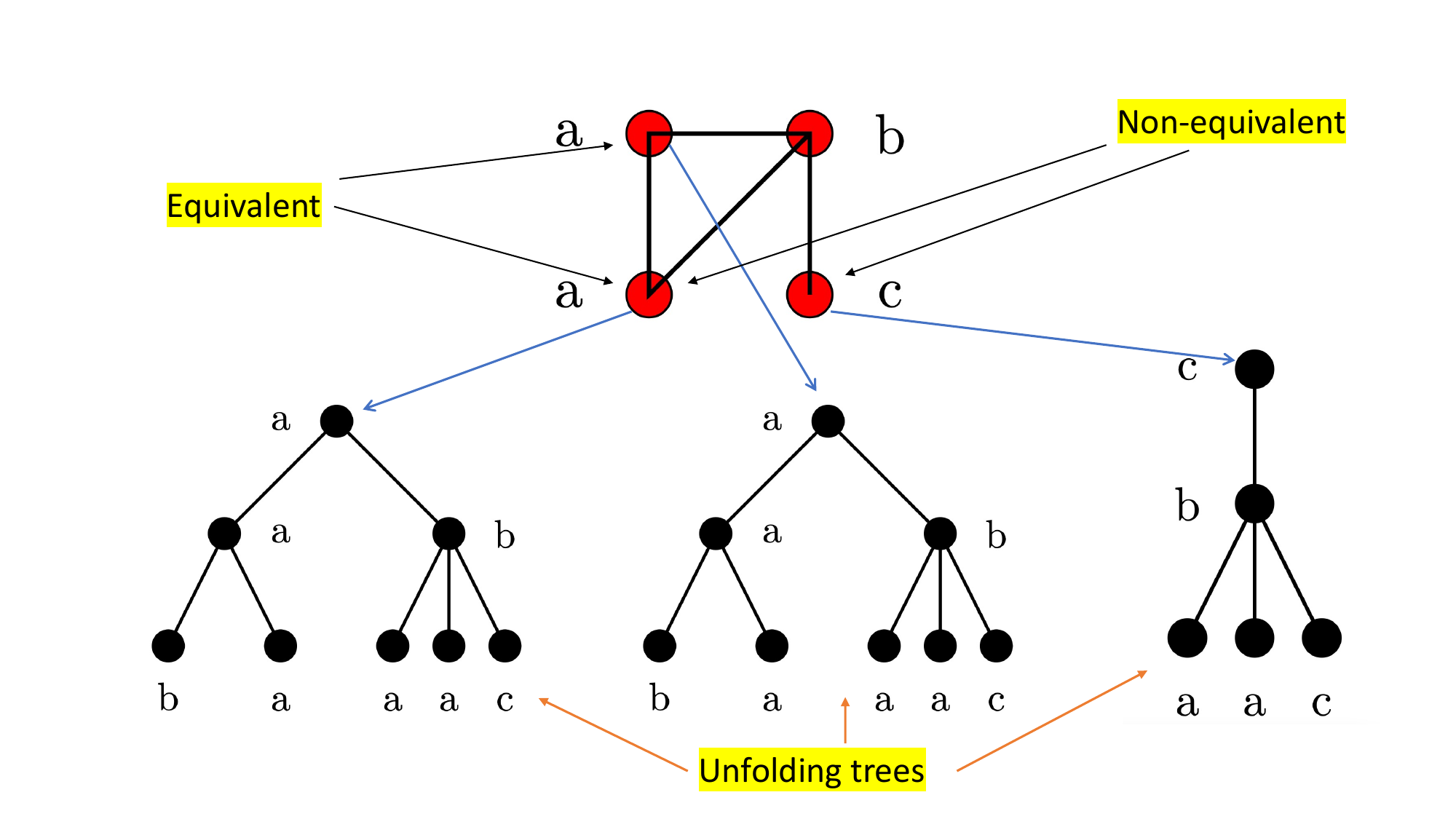}
\caption{An example of a graph with some unfolding trees. The symbols outside the nodes represent features. The two nodes on the left part of the graph are equivalent and have equivalent unfolding trees. }
\label{fig:unfolding1}
\end{figure*}

Despite many advances in research on
approximation theory for GNNs, there are still open problems to be investigated. 
First of all, the most general results available on modern GNNs are based on
the Stone--Weierstrass theorem and state that the functions which can be approximated by GNNs are dense in the invariant continuous function space,
modulo the 1--WL test \cite{azizian2020expressive}. However, the Stone--Weierstrass theorem is existential in nature, so that, given a target function to be approximated, it does not allow to
construct a GNN architecture that can reach the desired approximation --- defining, for example, the number of its layers, and the feature dimension required to build the approximator.
Moreover,   the current results apply only to continuous functions on node/edge labels,
which are defined on a compact subset of $\mathbb{R}^{{\mathbf{L}}}$, a fact that may not hold in practical application domains, since, for instance,  the function to be approximated may show step--wise behavior with respect to some  inputs. Finally, all the results on the expressive capacity of modern GNN models
are dedicated to graph classification/regression tasks, but node classification/regression problems are also widely present in practical applications and it is important to generalize the theoretical results on expressivity to them as well. In addition,  it is useful to study
the relationships between unfolding trees and the 1--WL test in this context. Indeed, it can be observed that the Weisfeiler--Lehman test assigns a color to all the nodes of a graph to make them distinguishable, and it can be naturally expected that the equivalence classes defined by the colors are related to those defined by the unfolding trees. In fact,  it has been proved that the
two mechanisms, colors or unfolding trees, produce the same profiles for graphs~\cite{krebs2015universal}, namely the same number of nodes per equivalence class, but whether they produces exactly the same profiles with respect to single nodes, i.e., nodes get assigned the same equivalence class, is still an open problem. A formal and precise answer to this question will allow us to use the two frameworks in a targeted or exchangeable way in the context of node classification/regression tasks.

In this work, we present an alternative approach to study the approximation capability of 
recent GNNs that allows to answer to the above questions.

The main contributions of this paper are listed below.

\begin{itemize} 
\item
We prove that, on connected graphs, modern  GNNs, realizing node--focused functions, are  capable of approximating, in probability and up to any precision, any measurable function that respects the 1--WL equivalence. 
Intuitively, this means that GNNs are a kind of universal approximators for functions on the nodes of the graph, modulo the limits enforced by the 1--WL test.
Such a result describes the GNN capability for node classification/regression tasks.
\item 
The presented proof is the most general  on the GNN approximation capability that we are aware of, since it
holds for generic graphs with real feature vectors and for a broad class of GNNs, which includes most of the  current models.
Moreover, it is assumed that the target function is measurable, which
permits the approximation of discontinuous and more complex functions w.r.t. existing results, e.g.
\cite{jegelka2022theory}.  Finally, the proof is based on a technique that allows us to deduce information on the architecture of the GNN  that can reach the desired approximation. Such an information cannot be derived with the Stone--Weierstrass theorem and  includes, for instance, hints on the number of iterations, the number of layers,  the dimension of hidden features,  and the type of the network to be used to implement the aggregation function.
\item  It is shown that, in order to reach any desired approximation accuracy, 
a single real hidden feature is sufficient, the aggregation network must contain at least 
one hidden layer, and the GNN must adopt at least $2r-1$ iterations, namely the GNN must include
$2r-1$ layers, where $r$ is the maximum number of nodes of any graph in the domain. The latter bound
on GNN iterations/layers can be surprising    because we may expect that $r$ iterations are sufficient to diffuse the
information on the whole graph.  We will  clarify that such  a  bound is due to the nature of node classification/regression tasks. Actually, $r$ iterations are sufficient for graph classification/regression tasks, but they are not enough for node--focused tasks, which are more expensive from a computational point of view.
\item A set of experiments has been carried out in order to show that GNNs, if their architectures are sufficiently general, can approximate any function, modulo the unfolding equivalence/1--WL test, up to a desired degree of precision, so as suggested by the proposed theoretical results. 
\end{itemize}
We remark that understanding the approximation power of GNNs is fundamental in order to 
explain GNN  limitations and capabilities in practical applications and to have suggestions for designing novel advanced models.
The present contribution aims to fill the gap left in literature on the characterization of the expressive power of modern GNNs under some crucial aspects, such as the universality  on real-attributed graphs,   the approximation capabilities on node-focused tasks,  the number of required layers. This finally contributes to
a more thorough comprehension of the GNN machine learning framework.

The rest of the paper is organized as follows. In Section~\ref{rel_work}, some related work is described. Notation and basic concepts are introduced in Section~\ref{notation}, while  Section~\ref{main_results} presents the main contribution of this paper. In Section \ref{section_experiments}, we present the experiments  conducted to validate our theoretical results.
Finally, Section~\ref{ConcFut} gives some conclusive remarks and presents future perspectives. To make the reading more fluid, the proofs are collected in the Appendix.

\section{Related Work}\label{rel_work}
Great attention has recently been paid to the Weisfeiler--Lehman test and its correlation with the expressiveness of GNNs. Xu \textit{et al.} \cite{xu2018powerful} have shown that message passing GNNs are at most as powerful as the 1--WL test; this upper bound could be overcome by injecting the node identity in the message passing procedure, as implemented in \cite{you2021identity}.  
Morris \textit{et al.} \cite{morris2019} have gone beyond the 1--WL test, implementing $k$--order WL tests as message passing mechanisms into GNNs.
In \cite{sato2020s}, the WL test mechanism applied to GNNs is studied within the paradigm of unfolding trees (also called \textit{computational graphs}), without really establishing an equivalence between the two concepts, so as in~\cite{zhang2021nested} (where the unfolding trees are called \textit{rooted subgraphs}). In \cite{alon2020bottleneck}, it is shown that the Weisfeiler--Lehman test tends to oversquash the information coming from the neighbours; moreover, it is claimed that GNNs with at least $K$ layers, where $K$ is the diameter of the graphs in the dataset, do not suffer from under--reaching, which means that the information cannot travel farther than $K$ edges along the graph. Nevertheless, a theoretical proof that GNNs succeed in overcoming the under--reaching behavior is not provided.

Universal approximation properties have been demonstrated for several GNN settings.
The OGNN \cite{GNN} model was proved to be a universal approximator on graphs preserving the unfolding equivalence in \cite{Comp_GNN}. Universal 
 approximation is shown for GNNs with random node initialization in \cite{abboud2020} while, in~\cite{xu2018powerful}, they are proved to be able to encode any graph with countable input  features.
The universal approximation property has been extended to Folklore Graph Neural Networks in~\cite{maron2019provably}, to Linear Graph Neural Networks
and general GNNs in \cite{azizian2020expressive,maron2018invariant}, both in the invariant and equivariant case, but without any reference to the required number of layers.
A relation between the graph diameter and the
computational power of GNNs has been established  in~\cite{loukas2019graph}, where the GNNs are assimilated to the so--called LOCAL models~\cite{Angluin80,linial92,Naor93} and it is proved that a GNN with a number of layers larger than the diameter of the graph can compute any Turing function of the graph. Nevertheless,  no information on the aggregation function characterization is given. 
The generalization capability of GNNs has been also studied using different approaches, which include the Vapnik--Chervonenkis dimension for OGNNs \cite{vapnik2018}, and the uniform stability \cite{zhou2021} and  Rademacher complexity \cite{garg2020generalization} for modern GNNs. Designing GNN architectures that provide good generalization along with good expressive power is a hot research topic (see, e.g., \cite{omri2020}). Moreover, an extensive survey on the theory of Graph Neural Networks can be found in ~\cite{jegelka2022theory}.

The results presented in this work differ from what can be found in literature mainly because we prove the GNN ability to approximate measurable functions based on a proof which is constructive, i.e. capable of suggesting the network architecture that will guarantee a given approximation.

\section{Preliminaries} \label{notation}
In this section, we introduce the required notation and the basic definitions used throughout the manuscript.

\subsection{Graphs}
A graph $\mathbf{G}$ is a pair $(\mathbf{V},\mathbf{E})$, where $\mathbf{V}$ is the set of \textit{vertices} or \textit{nodes} and $\mathbf{E}$ is the set of \textit{edges} between nodes in $\mathbf{V}$. Graphs are \textit{directed} or \textit{undirected}, according to whether the edge $(v,u)$ is different from the edge $(u,v)$ or not. Moreover, a graph is \textit{connected} if there is a path from any node to any other node in the graph. In the following, we assume that graphs are undirected and connected.

The set ${ne}[v]$ is the \textit{neighborhood} of $v$, i.e. the set of nodes connected to $v$ by an edge, while ${ne_i}(v)$ denote the $i$--th neighbor of $v$ --- the set of all nodes connected to $v$ with a path of length $i$. Finally, $|\mathbf{G}|$ defines the cardinality of the set of vertices in $\mathbf{G}$.
From now on, we will always consider graphs with finite cardinality, i.e., $|\mathbf{G}|=\mathbf{L} < \infty$.

 Nodes may have attached features, collected into vectors called \textit{labels}, identified with $\mathbf{\ell}_v \in \mathbb{R}^{{\mathbf{L}}}$.

\subsection{Graph neural networks}

Graph Neural Networks adopt  a local computational mechanism to process graphs. The information related to a node $v$ is stored into a feature vector $\mathbf{h}_{v}\in\mathbb{R}^m$,
which is updated recursively by combining the feature vectors of neighboring nodes. 
After $k$ iterations, the feature vector  $\mathbf{h}_v^k$ is supposed to contain a representation of both the structural information and the node information within a $k$--hop neighborhood. After processing is complete, the node feature vectors can be used to classify the nodes or the entire graph.

More rigorously, in this paper,  we consider GNNs that use the following general updating scheme:
\begin{align} \label{Def_GNN}
\mathbf{h}^k_v = & \text{\small{COMBINE}}^{(k)}\big(\mathbf{h}^{k-1}_{v}, &\\
& \text{\small{AGGREGATE}}^{(k)} \{\!\{\mathbf{h}^{k-1}_{u}, \, u \in ne[v]\}\!\}\big) & \nonumber
\end{align}
\noindent
where the node feature vectors are initialized with the node  labels, i.e.,  $\mathbf{h}^0_v= \mathbf{\ell}_v\in\mathbb{R}^{{\mathbf{L}}}$ for each $v$. Here, differently from other approaches, we assume that 
labels can contain real numbers.
Moreover, $\text{AGGREGATE}^{(k)}$ is a function which aggregates the node features obtained in the ($k-1$)--th iteration, and $\text{COMBINE}^{(k)}$ is a function that combines the aggregation of the neighborhood of a node with its feature at the ($k-1$)--th iteration.
In graph classification\slash regression tasks, the GNN is provided with a final READOUT layer that produces the output combining all the feature vectors at the last iteration $K$:
\begin{equation}
\mathbf{o} \; = \; \text{READOUT}( \{\mathbf{h}^{K}_v, \; v \in \mathbf{V}\})\,
\label{eq:heq}
\end{equation}
whereas, in node classification\slash regression tasks, the READOUT layer produces an output for each node, based on its features:
\begin{equation}
\mathbf{o}_v \; = \; \text{READOUT}( \mathbf{h}^{K}_v)\,
\label{eq:h2eq}
\end{equation}

In this paper, we will focus mainly on node classification/regression tasks.
The learning domain of the GNN will be denoted  by the  graph--node pair $\mathcal{D}= \mathcal{G} \times \mathcal{V}$, where $\mathcal{G}$ is a set of graphs 
and $\mathcal{V}$ is a subset of their nodes. Therefore, the function $\varphi$, implemented by the GNN, takes in input a graph $\mathbf{G}$ and  one of its nodes $v$, and  returns an output $\varphi(\mathbf{G}, v)\in  \mathbb{R}^o$, where $o$ is the output dimension. 

The framework described by Eqs.~(\ref{Def_GNN})--(\ref{eq:h2eq}) is commonly used to study theoretical properties of modern GNNs (see e.g.~ \cite{xu2018powerful}).
The class of models covered by such a framework is rather wide and includes, for example,  GraphSAGE \cite{hamilton2017inductive}, GCN \cite{kipf2016},  GATs \cite{GAT}, GIN \cite{xu2018powerful}, ID-GNN \cite{you2021identity}, and GSN \cite{bouritsas2020improving}. 

It is worth mentioning that the OGNN model is not formally covered, both because in OGNNs
the input of $\text{AGGREGATE}^{(k)}$ and $\text{COMBINE}^{(k)}$ contains the node labels $\mathbf{\ell}_v$  and possibly also the  edge features, and because the node features are not initialized to $\mathbf{\ell}_v$. Other models, such as MPNN~\cite{gilmer2017},
NN4G~\cite{micheli2009} and GN~\cite{Battaglia2018} are not included as well for similar reasons. 
Of course, Eq.~(\ref{eq:heq}) could easily be extended
to include also OGNNs and the models mentioned above, but here we prefer not to complicate the proposed framework to keep the notation and proofs simple.

\subsection{Unfolding trees and unfolding equivalence}

 \textit{Unfolding trees}~\footnote{Unfolding trees are also referred to as \textit{computational graphs}~\cite{garg2020generalization} or \textit{search trees} ~\cite{sato2020s,xu2018powerful}.}  and \textit{unfolding equivalence} are two concepts that have been introduced in~\cite{Comp_GNN}  with the aim of capturing the expressive power of the OGNN model.
Intuitively, 
an \textit{unfolding tree} $\mathbf{T}_v^d$ is the tree obtained by unfolding the graph up to the depth $d$, using the node $v$ as its root.
Fig. \ref{fig:unfolding1} shows some examples of unfolding trees. In the following, a  formal recursive definition is provided.
\begin{Def}
The unfolding tree $\mathbf{T}_v^d$ of a node $v$ up to depth $d$ is
\begin{equation*}
\mathbf{T}_v^d \; = \; \left \{ \begin{array}{lcr} 
\text{Tree}(\mathbf{\ell}_v) & \text{if} & d  \; = \; 0 \\
\text{Tree}(\mathbf{\ell}_v,\mathbf{T}_{ne[v]}^{d-1}) & \text{if} & d \; > \; 0
\end{array} \right.
\end{equation*}
\noindent
where 
$\text{Tree}(\mathbf{\ell}_v)$ is a tree constituted of a single node with label $\mathbf{\ell}_v$ and $\text{Tree}(\mathbf{\ell}_v,\mathbf{T}_{ne[v]}^{d-1})$ is the tree with the root node labeled with $\mathbf{\ell}_v$ and having sub--trees $\mathbf{T}_{ne[v]}^{d-1}$. The set $\mathbf{T}_{ne[v]}^{d-1}=\{\mathbf{T}_{u_1}^{d-1},\mathbf{T}_{u_2}^{d-1}, \dots\}$ collects all unfolding trees having depth $d-1$, with $u_i \in ne[v], \, \forall i$.\\
Moreover, the \textit{unfolding tree of $v$},  $\mathbf{T}_v= \lim \limits_{d \rightarrow \infty} \mathbf{T}_v^d$, is obtained  by merging all unfolding trees $\mathbf{T}_v^d$ for any $d$.
\hfill $\blacksquare$
\end{Def}

\vspace{0.3cm}
Note that, since a GNN adopts a local computation framework, its knowledge about the graph is updated step by step, every time Eq.~(\ref{Def_GNN}) is applied. Actually,  at the first step, $k=0$, the feature vectors $\mathbf{h}_v^0$ depends only on the local label. Then, at step $k$, 
the GNN updates the feature vector  $\mathbf{h}_v^k$ using the neighbour data, with
the node feature vector that depends on the $k$--distant neighbourhood of $v$. Thus, intuitively, the unfolding tree $ \mathbf{T}_v^k$
 describes  the  information that is theoretically available to the GNN at node $v$ and step $k$.
 Such an observation has been used in \cite{Comp_GNN} to study the expressive power of the OGNN model and will be used also in this  paper for the same purpose.

In this context, two questions have been studied. 
\begin{enumerate}
    \item[(1)] Can GNNs compute and store into the node features a coding of the unfolding trees, namely can GNNs store all the theoretically available information? 
    \item[(2)] Since unfolding trees are  different from the input graphs, how does this affect the GNN expressive power?
    \end{enumerate}
Regarding the first question, it has been shown that indeed both OGNNs and modern GNNs can compute and store in the node features a coding of the unfolding trees, provided that the appropriate network architectures are used in $\text{COMBINE}^{(k)}$
and $\text{AGGREGATE}^{(k)}$~\cite{sato2020s,Comp_GNN,xu2018powerful}. Regarding question (2), we can easily argue that if two nodes 
have the same unfolding tree, then GNNs  produce the same encoding on those nodes. Such a fact highlights an evident limitation of the expressive power of GNNs. The unfolding equivalence is a
formal tool designed to capture such a limit: it is an equivalence relation that brings together nodes with the same unfolding tree, namely it groups nodes that cannot be distinguished by GNNs.

\begin{Def}\label{def:unf_eq}
Two nodes $u$, $v$ are said to be \textit{unfolding equivalent} $u \backsim_{ue} v$, if $\mathbf{T}_u = \mathbf{T}_v$. Analogously,
two graphs $\mathbf{G}_1, \mathbf{G}_2$ are said to be \textit{unfolding equivalent} $\mathbf{G}_1 \backsim_{ue} \mathbf{G}_2$, if there exists a bijection between the nodes of the graphs that respects the partition induced by the unfolding equivalence on the nodes~\footnote{For the sake of simplicity, and with notation overloading, we adopt the same symbol $\backsim_{ue}$ both for the equivalence between graphs and the equivalence between nodes.}.
\hfill $\blacksquare$
\end{Def}

\vspace{.3cm}
Since GNNs have to fulfill the unfolding equivalence, also the functions on graphs that they can realize share this limit. In our results on the approximation capability of GNNs, our focus is on functions that preserve
the unfolding equivalence.
Those functions are general enough except that they  produce the same output on equivalent nodes.

\subsection{The color refinement algorithm and the Weisfeiler--Lehman test}

The  \textit{first order Weisfeiler--Lehman test}  (\textit{1--WL test} in short) \cite{leman1968} is
a method  to test whether two graphs are isomorphic, based on a graph coloring algorithm, called \textit{color refinement}.  
The coloring algorithm is applied in parallel on the two graphs. Each node
keeps a state (or color) that gets refined in each iteration by
aggregating information from its neighbors' state. The refinement stabilizes after a few iterations and it outputs a
representation of the graph. Two graphs with different representations, i.e. with a different number of nodes for each color, are not isomorphic.
Conversely, if the numbers match, then the graphs are \textit{possibly} isomorphic. Note that the test is not conclusive in the case of a positive answer, as the graphs may still be non--isomorphic. Actually, the algorithm just provides an approximate solution to the problem of graph isomorphism.
 
There exist  different versions of the coloring algorithm: in this paper, we adopt a  coloring scheme in which also the node labels are considered. Since
GNNs process both the structure and the labels of the graphs, it is useful to consider both these sources of  information, in order  to analyse the GNN expressive power. Such an approach has been used, for example, in~\cite{sato2020s}. More precisely, the coloring is carried out by an iterative algorithm which,  at each iteration, computes a \textit{node coloring} $c_l^{(t)} \in \Sigma$, being $\Sigma$ a subset of values representing the colors. The node colors are initialized on the basis of the node features and then they are updated using the coloring from the previous iteration. The algorithm is sketched in the following.
\begin{enumerate}
\item
At iteration 0, we set 
$$c_v^{(0)}=\text{HASH}_0(\mathbf{\ell}_v)$$
where $\text{HASH}_0: \mathbb{R}^q \rightarrow \Sigma$ is a function that bijectively encodes real features using colors. In case of unattributed graphs, we assume $q=1$ and $\ell_v = 1 \;,  \forall v \in V$, $\forall G=(V,E) \in \mathcal{G}$.
\item
For any iteration $t >0$, we set
\begin{equation*}
c_v ^{(t)}=\text{HASH}(c_v^{(t-1)},   \{\!\{ c_n^{(t-1)}| n \in ne[v]  \}\!\}  )
\end{equation*}
where $\text{HASH}:  \Sigma \times \Sigma^* \rightarrow \Sigma$ is a function that  bijectively maps the input pairs to a unique value in $\Sigma$.  The notation $ \{\!\{ \cdot \}\!\} $ represents \textit{multisets}, which can be formulated, in our setting, without loss of generality, as ordered sequences of elements in $\Sigma$, i.e. they belong to  $\Sigma^* = \bigcup\limits_{n\geq1} \Sigma^n$.  Moreover, we assume that the same $\text{HASH}$ function is used for all the iterations\footnote{In \cite{kiefer2020power}, it is assumed that the $\text{HASH}$ functions are different at each step, so that  the algorithm can reuse the same finite set of colors, e.g., denoted by the integer numbers
$1$ to $r$, where $r$ is the number of nodes in the graph. This can be achieved by bijectively
re--mapping the colors after each refinement step. The two algorithms are equivalent w.r.t. the goal
of isomorphism testing. Here, we prefer to assume that a unique $\text{HASH}$ function is adopted because
such an assumption will simplify our discussion about the properties of the algorithm.} .
\end{enumerate}

In order to compare two graphs $\mathbf{G}'=(\mathbf{V}',\mathbf{E}')$,
$\mathbf{G}''=(\mathbf{V}'',\mathbf{E}'')$, the  coloring refinement is applied in parallel on
 $\mathbf{G}' ,\mathbf{G}'$, and, at each step, the color profiles generated on each graph are compared, namely, $\{ c_n^{(t)}| n \in \mathbf{V}'\} =\{(c_m^{(t)}| m \in \mathbf{V}''\} $ is evaluated.
If, at any iteration, the colors of the  graphs are different, then the 1--WL test fails and
we can conclude that the  graphs are not isomorphic; otherwise, the test succeeds. The 1--WL test allows to
distinguish most non--isomorphic graphs, but may succeed on some rare examples.

In this paper, we  use the color refinement also to compare  nodes.  Thus, given two
nodes $u,v$, which in the most general case can belong to different graphs, we compare
their colors at each iteration, i.e.,  $c_u^{t}=c_v^{t}$. If, at any iteration, the node colors are different, then the 1--WL node test fails, otherwise it succeeds. Notice that the color of a node
$n$ at iteration $t$ depends on the sub--graph $\mathbf{G}_n^t$, defined by the $t$--hop neighbourhood
of $n$. Thus, intuitively, the 1--WL node test allows to check the isomorphism of the neighbourhoods
of two nodes, $\mathbf{G}_u^t\backsim\mathbf{G}_v^t$.

By the mentioned algorithms, we can easily produce a definition of 
WL--equivalence for graphs and nodes.

\begin{Def}[WL--equivalence] \label{def:wlequivalence}
Two graphs,  $\mathbf{G}'=(\mathbf{V}',\mathbf{E}')$ and
$\mathbf{G}''=(\mathbf{V}'',\mathbf{E}'')$, are said to be \textbf{WL--equivalent}, 
if they have the same multisets of colors  at each iteration of the color refinement algorithm,  i.e.,
$\{ c_n^{(t)}| n \in \mathbf{V}'\} =\{ c_m^{(t)}| m \in \mathbf{V}''\}$ for any $t$.
Analogously, two nodes, $u$ and $v$, are said to be \textbf{WL--equivalent},  $u \backsim_{WL} v$, if they have the same colors at each step of  the color refinement algorithm, i.e. $c_u^{(t)}=c_v^{(t)}$ for any $t$. 
$\hfill \blacksquare$
\end{Def}

It is interesting to observe that the color refinement procedure must be iterated until a difference in colors is detected between the compared items, either  graphs
or nodes, or until a maximum number of iterations is reached. It is well known that the color refinement of the common Weisfeiler--Lehman test, 
defined for graph comparison, can be halted when the node partition defined by colors become stable:
if the two graphs share the colors when the stability is reached, then the equality will last forever.
 More precisely, let $\pi_t(\mathbf{G})$ be the partition of the nodes of
$\mathbf{G}$ constructed by collecting in the same class the nodes that have the same color at iteration $t$.
It is not difficult to prove that the partitions become finer at each iteration, $\pi_{t-1}(\mathbf{G})\succeq\pi_t(\mathbf{G})$, and that there exists an iteration $T$ at which they become stable,
 $\pi_{T-1}(\mathbf{G})\equiv\pi_T(\mathbf{G})$, Moreover, it can be proved that $r-1$, where 
 $r$ is the number of nodes in $\mathbf{G}$, is both an upper bound and a lower bound
 for the number $T$ of iterations required to reach the stability~\cite{kiefer2020iteration}.

Note that the stability of the node partition does not imply that the colors do not change.
Actually, if the colors are not reused, as in our definition, except in the case where the graph is free of connections, new colors appear at each iteration. Intuitively, this happens because the use, at a node
 $u$,  of a new
 color, which has not been considered in the past, causes the algorithm to create new colors for the neighbors of $u$ as well: thus, new colors will be generated forever. This observation can be used to
explain why the upper bound on the iterations of the color refining procedure is different 
in the case of node or graph equivalence.
We will see that we must wait for $2r-1$ iterations before halting the procedure in the former case, 
whereas, as mentioned above, $r-1$ iterations are sufficient in the latter. 

\section{Main Results}\label{main_results}
In this section, the  main results of the paper are presented and discussed. For ease of reading the proofs of the theorems are given in the Appendix.

\subsection{Unfolding  and Weisfeiler--Lehman equivalence}
 The first proposed result regards the relationship between the unfolding  and the Weisfeiler--Lehman equivalence.
The following two theorems clarify that the two equivalence relations produce the same 
partitions of nodes and graphs. Moreover, the correspondence exists also between the  intermediate equivalences defined by, respectively,  the colors at each iteration
of the WL algorithm and  the unfolding trees having a corresponding depth. Formally, let us denote by 
$\backsim_{ue_t}$ the unfolding equivalences, at depth $t$, between nodes and graphs
that are defined as in \ref{def:unf_eq} but considering unfolding trees of depth $t$
in place of infinite trees. Similarly,  let us denote by 
$\backsim_{WL_t}$ the WL--equivalences, at iteration $t$, that are defined as in \ref{def:wlequivalence}, where only the colors  of the refinement procedure up 
to the $t$--th iteration are considered.

\begin{Thm}\label{equiv_theo1}
\hspace{0.5 cm}\\
Let ${\mathbf G}=({\mathbf V},{\mathbf E})$ be a labeled graph. Then, for each  $u,v \in {\mathbf V}$, $u \backsim_{ue} v$ holds if and only if $u \backsim_{WL} v$ holds. Moreover, for each integer $t\geq 0$, $u \backsim_{ue_t} v$ if and only if $u \backsim_{WL_t} v$. 
\hfill \qed
\end{Thm}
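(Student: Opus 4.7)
\medskip

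\noindent\textbf{Proof plan.} The plan is to establish the equivalence level by level: I will prove by induction on $k\ge 0$ the following auxiliary claim.

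\medskip

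\noindent\emph{Claim.} For every pair of nodes $u,v\in\mathbf{V}$ and every $k\ge 0$,
$$c_u^{(k)} \;=\; c_v^{(k)} \quad\Longleftrightarrow\quad \mathbf{T}_u^k \;=\; \mathbf{T}_v^k.$$

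\noindent The base case $k=0$ is immediate: $c_u^{(0)}=\text{HASH}_0(\mathbf{\ell}_u)$ and $\mathbf{T}_u^0=\text{Tree}(\mathbf{\ell}_u)$, so both equalities reduce to $\mathbf{\ell}_u=\mathbf{\ell}_v$ (using that $\text{HASH}_0$ is a bijection on features). For the inductive step, I use that $\text{HASH}$ is injective, so $c_u^{(k)}=c_v^{(k)}$ is equivalent to the pair-plus-multiset identity
$$\bigl(c_u^{(k-1)},\,\{c_n^{(k-1)}\mid n\in ne[u]\}\bigr) \;=\; \bigl(c_v^{(k-1)},\,\{c_n^{(k-1)}\mid n\in ne[v]\}\bigr),$$
which by the induction hypothesis, applied componentwise, translates into $\mathbf{T}_u^{k-1}=\mathbf{T}_v^{k-1}$ together with the existence of a bijection $\pi\colon ne[u]\to ne[v]$ with $\mathbf{T}_n^{k-1}=\mathbf{T}_{\pi(n)}^{k-1}$ for each $n$. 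By the recursive Definition of $\mathbf{T}_v^k$, this is exactly $\mathbf{T}_u^k=\mathbf{T}_v^k$. The converse direction of the inductive step uses that the depth-$(k-1)$ unfolding of a node is recovered by truncating its depth-$k$ unfolding at the root's children, so $\mathbf{T}_u^k=\mathbf{T}_v^k$ forces both the root-label equality and the multiset identity of depth-$(k-1)$ subtrees, which by induction yields the required color identity at step $k-1$ and the matching multiset of neighbor colors.

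\medskip

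\noindent To pass from the per-iteration claim to the theorem, I would use the standard monotonicity of WL refinement: since $c^{(k)}$ is a deterministic function of $c^{(k-1)}$ together with neighbor colors, the partition induced by $c^{(k)}$ refines the one induced by $c^{(k-1)}$; hence $c_u^{(k)}=c_v^{(k)}\Rightarrow c_u^{(k-1)}=c_v^{(k-1)}$. Combined with the termination condition (the partition stabilizes past the first iteration that fails to create new color classes, and remains stable thereafter), one obtains that $u\backsim_{WL}v$ at termination is equivalent to $c_u^{(k)}=c_v^{(k)}$ for \emph{every} $k\ge 0$. On the unfolding-tree side, by definition $\mathbf{T}_u=\mathbf{T}_v$ iff $\mathbf{T}_u^k=\mathbf{T}_v^k$ for all $k\ge 0$. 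Chaining these two characterizations through the Claim yields $u\backsim_{WL}v\iff u\backsim_{ue}v$.

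\medskip

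\noindent\textbf{Where I expect the friction.} The bookkeeping in the inductive step is where one has to be careful: both sides involve (i) a root/color scalar comparison and (ii) a multiset comparison over neighborhoods, and turning ``multiset of colors'' into ``multiset of subtrees'' requires invoking the injectivity of $\text{HASH}$ together with a bijection between $ne[u]$ and $ne[v]$ chosen by the induction hypothesis. The backward direction of the induction needs the mild but essential observation that $\mathbf{T}_u^k=\mathbf{T}_v^k$ implies $\mathbf{T}_u^{k-1}=\mathbf{T}_v^{k-1}$ (truncation at depth $k-1$). The remaining subtlety is the translation from ``equal colors at convergence'' to ``equal colors at every iteration,'' which is handled by the refinement-monotonicity of WL noted above; nothing here requires knowing the exact convergence time, which is a separate result the paper derives as a corollary.
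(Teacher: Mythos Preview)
Your proposal is correct and follows essentially the same route as the paper: the paper proves exactly your Claim as a lemma (by induction on the iteration index, using injectivity of $\text{HASH}$ and the recursive definitions on both sides) and then declares the theorem a rephrasing of that lemma. If anything, you are more careful than the paper at the final step: the paper asserts without comment that the per-iteration equivalence ``rephrases'' the statement about $\backsim_{ue}$ and $\backsim_{WL}$, whereas you explicitly invoke the refinement-monotonicity of WL and the stabilization of the partition to pass from ``equal colors at every $k$'' to ``equal colors at termination,'' which is the honest way to close that gap.
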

\begin{Thm}\label{equiv_fin}
\hspace{0.5 cm}\\
Let ${\mathbf G}_1,{\mathbf G}_2$ be two graphs.
Then, ${\mathbf G}_1 \backsim_{ue} {\mathbf G}_2$  if and only if ${\mathbf G}_1 \backsim_{WL} {\mathbf G}_2$.
Moreover, for each integer $t\geq 0$, ${\mathbf G}_1 \backsim_{ue_t} {\mathbf G}_2$ if and only if ${\mathbf G}_1 \backsim_{WL_t} {\mathbf G}_2$.
\hfill \qed
\end{Thm}

Both the unfolding equivalence and  the  WL--equivalence have been described using a recursive definition local to nodes. Figure~\ref{fig:unfolding}
shows an example in which the unfolding trees and the colors of two nodes are iteratively 
computed: in the example, the colors of the nodes become different when also the unfolding trees become different.

\begin{figure*}[h!]
 \includegraphics[width=.99\linewidth]{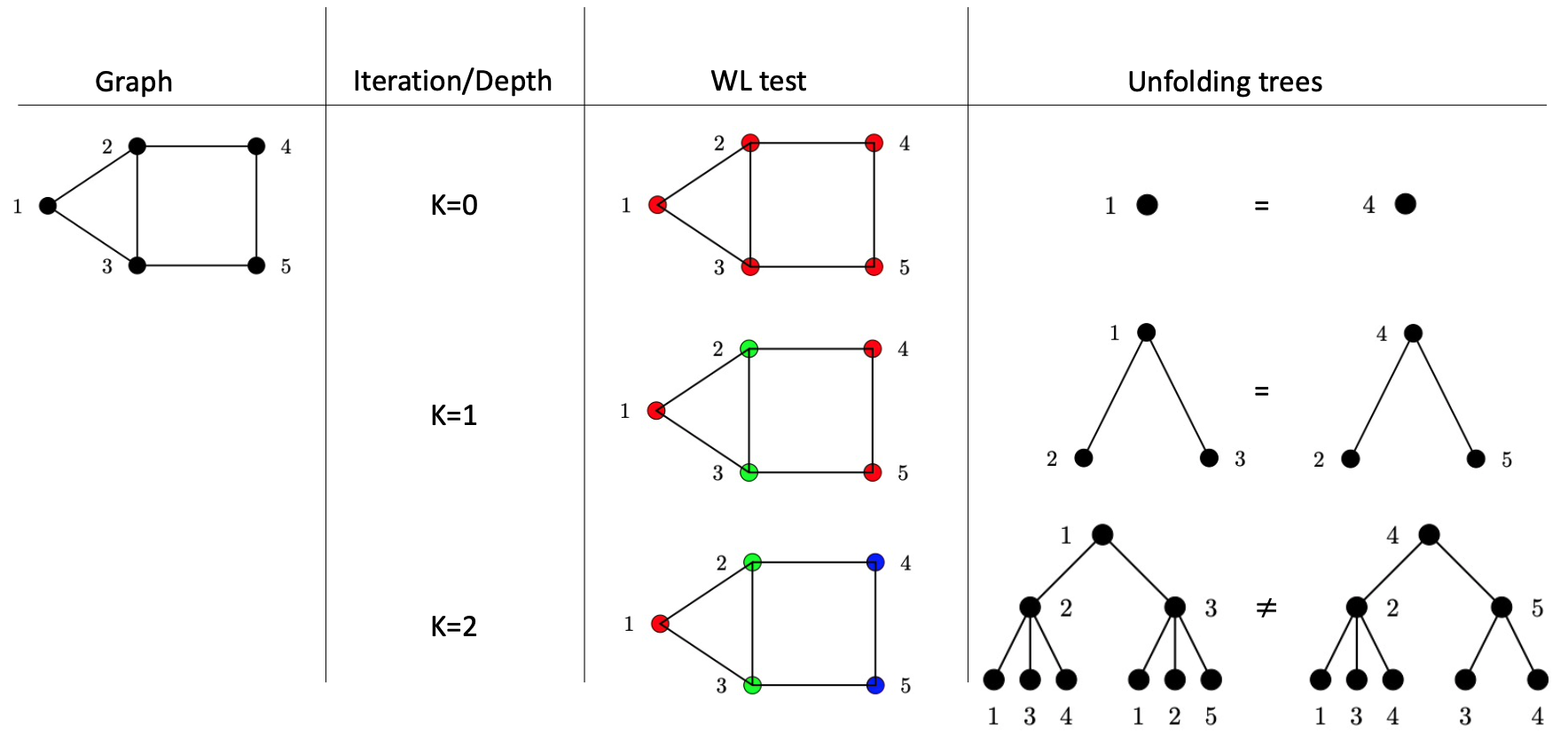}
\caption{A graphical representation of the relationship between the color refinement and the unfolding equivalence, applied on nodes 1 and 4 of the given graph.}
\label{fig:unfolding}
\end{figure*}

Indeed, the existence of a relationship between the equivalences appears to be a natural consequence of their definition. In fact, it is sometimes assumed in the literature (f.i., in~\cite{maron2018invariant}) that the two tools can be used interchangeably  but, as far as we know, there is no formal demonstration of their effective equivalence.  More precisely, in~\cite{krebs2015universal,Angluin80, dell2018lov},
it has been proved that the 1--WL test and unfolding trees produce the same profile on graph without
attributes. Therefore, Theorem~\ref{equiv_fin} is just an extension of those results to the case
of graph with attributes. On the other hand, Theorem~\ref{equiv_theo1}, focused on nodes, is completely novel.

Theorems~\ref{equiv_theo1} and   \ref{equiv_fin} are interesting since they formally confirm that the two equivalences are exactly interchangeable and can be used together to study GNNs. While the Weisfeiler--Lehman test has been often adopted to analyse the expressive power of GNNs in terms of their capability of recognizing different graphs, the unfolding equivalence and, more precisely, unfolding trees, can provide a  tool to understand the information that a GNN can use at each node to implement its function.

For example, it is well known that GNNs cannot distinguish regular graphs where nodes have the same features (see e.g. \cite{sato2020s}). Of course, in this case, a GNN is not able to distinguish any node, since all the unfolding trees are equal (see Figure \ref{fig:unfolding2}a). 
On the one hand, when a target node has different features with respect to the others,
also the unfolding trees incorporate such a difference and the nodes at different distances 
from this target node belong to different equivalence classes (see Figure \ref{fig:unfolding2}b).
On the other hand, if all the  labels are different, then each node belongs to a different
 class, since all unfolding trees are different (see Figure \ref{fig:unfolding2}c). 
 \begin{figure}[h!]
\centering
\subfigure[]{
\begin{tikzpicture}[scale=0.5]
\draw[fill=black] (0.5,0) circle (6pt);
\draw[fill=black] (2,2) circle (6pt);
\draw[fill=black] (4,2) circle (6pt);
\draw[fill=black] (5.5,0) circle (6pt);
\draw[fill=black] (2,-2) circle (6pt);
\draw[fill=black] (4,-2) circle (6pt);
\node at (0,0) {a};
\node at (1.5,2) {a};
\node at (4.5,2) {a};
\node at (1.5,-2) {a};
\node at (4.5,-2) {a};
\node at (6,0) {a};
\draw[thick] (0.5,0) -- (2,2) -- (4,2) -- (5.5,0) -- (4,-2) -- (2,-2) -- (0.5,0);
\end{tikzpicture}
}
\subfigure[]{
\begin{tikzpicture}[scale=0.5]
\draw[fill=red] (0.5,0) circle (6pt);
\draw[fill=yellow] (2,2) circle (6pt);
\draw[fill=green] (4,2) circle (6pt);
\draw[fill=black] (5.5,0) circle (6pt);
\draw[fill=yellow] (2,-2) circle (6pt);
\draw[fill=green] (4,-2) circle (6pt);
\node at (0,0) {b};
\node at (1.5,2) {a};
\node at (4.5,2) {a};
\node at (1.5,-2) {a};
\node at (4.5,-2) {a};
\node at (6,0) {a};
\draw[thick] (0.5,0) -- (2,2) -- (4,2) -- (5.5,0) -- (4,-2) -- (2,-2) -- (0.5,0);
\end{tikzpicture}
}
\subfigure[]{
\begin{tikzpicture}[scale=0.5]
\draw[fill=red] (0.5,0) circle (6pt);
\draw[fill=green] (2,2) circle (6pt);
\draw[fill=yellow] (4,2) circle (6pt);
\draw[fill=blue] (5.5,0) circle (6pt);
\draw[fill=orange] (2,-2) circle (6pt);
\draw[fill=black] (4,-2) circle (6pt);
\node at (0,0) {a};
\node at (1.5,2) {b};
\node at (4.5,2) {c};
\node at (1.5,-2) {d};
\node at (4.5,-2) {e};
\node at (6,0) {f};
\draw[thick] (0.5,0) -- (2,2) -- (4,2) -- (5.5,0) -- (4,-2) -- (2,-2) -- (0.5,0);
\end{tikzpicture}
}
\caption{(a) A regular graph where all nodes have the same features. All unfolding trees are equal.  (b) The equivalence classes when only one node has different features. (c) The equivalence classes when all nodes has different features.} 
\label{fig:unfolding2}
\end{figure}

We observe that, in principle, by adding random features to the node labels, we could make all the nodes distinguishable and improve the GNN expressive power. 
This fact was already mentioned for OGNNs \cite{Comp_GNN} and has been recently observed also for modern GNN models \cite{sato2021r}. Obviously, this is true only in theory, as the introduction of random features usually produces overfitting. However, some particular tasks exist where random features do not cause any overfitting, for example if these features are not related to the node content (see~\cite{Comp_GNN}, Section IV.A), while, in other cases, it is the particular model which is able to efficiently use random labels~\cite{sato2020s}.

A further important argument of our analysis regards how much deep must be unfolding trees, i.e., how many iterations of color refinement are needed, in order to make 
the equivalence stable. Actually, Theorems~\ref{equiv_theo1} and \ref{equiv_fin} suggest that the unfolding and Weisfeiler--Lehman equivalences
remain paired up to any depth/iteration $t$. Those equivalences naturally become finer and finer as the iterations proceed, i.e, $\backsim_{ue_{t-1}}\succ \backsim_{ue_t}$ and $\backsim_{WL_{t-1}}\succ \backsim_{WL_t}$, until $T$, when they  become stable and equal to the corresponding infinite equivalences, namely $\backsim_{ue_{T-1}}\equiv \backsim_{ue_T}\equiv \backsim_{ue}$ and $\backsim_{WL_{T-1}}\equiv \backsim_{WL_T}\equiv \backsim_{WL}$. As already mentioned in Section~\ref{notation}, according to the literature~\cite{kiefer2020iteration}, it is known that, for the WL--equivalence on graphs, $r-1$ is both
an upper and lower bound on $T$, where $r$ is the maximum number of nodes in the graphs. The following theorem, which takes inspiration from the results  in \cite{kiefer2020iteration} about covering trees, shows that, for equivalences \textit{on nodes}, the bounds are different  and we must wait up to $2r-1$ iterations,
i.e.,  trees of depth of $2r-1$,  until the equivalences become stable.

\begin{Thm}
The following statements hold for graphs with at most $r$ nodes.
\begin{enumerate}
\item
  Let $\mathbf{G}$ and $\mathbf{H}$ be connected graphs and $x,y$ be nodes of  $\mathbf{G}$ and $\mathbf{H}$, respectively. The infinite unfolding trees  $\mathbf{T}_x, \mathbf{T}_y$ are equal if and only if
    they are equal up to depth $2r-1$, i.e., $\mathbf{T}_x = \mathbf{T}_y$ iff $\;\mathbf{T}_x^{2r-1} = \mathbf{T}_y^{2r-1}$.
\item For any $r$, there exist two graphs $\mathbf{G}$ and $\mathbf{H}$ with nodes $x,y$, respectively,
    such that the infinite unfolding trees  $\mathbf{T}_x, \mathbf{T}_y$ are different, but they are equal up to
    depth $2r-16 \sqrt{r}$, i.e.,   $\mathbf{T}_x \neq \mathbf{T}_y$ and $\mathbf{T}_x^t = \mathbf{T}_y^t$ for $i \leq 2r-16 \sqrt{r}$. \hfill \qed
\end{enumerate}
\label{th:treeDepth}
\end{Thm}

In order to get an intuitive explanation of the reason why the bounds are different for graph and node equivalences, let us consider the case of two graphs $\mathbf{G}$ and $\mathbf{H}$ that are not equivalent,
i.e.,  $\mathbf{G}\not\equiv_{WL}\mathbf{H}$ holds. Moreover, let us assume that the parallel application of the
 refinement algorithm detects the difference in colors at iteration $\bar{T}$, namely
$\mathbf{G}\not\equiv_{WL_{\bar{T}}}\mathbf{H}$, for example because
 a new color is generated for graph $\mathbf{G}$ that is not present in $\mathbf{H}$. At this iteration, 
the WL algorithm is halted, since we detected at least a node $u$ in  $\mathbf{G}$ that is different from all
the nodes in $\mathbf{H}$. Conversely, if we continue the color refinement, the new color
of $u$ will generate other new colors, which are not present in $\mathbf{H}$,
 also for the neighbors of $u$. After at most $r$ iterations, the difference spreads throughout the graph,
 so that, finally, all the nodes in $\mathbf{G}$ are different from those in $\mathbf{H}$.
 This is intuitively correct, since all the nodes in $\mathbf{G}$ are connected to a node that does not exist
 in $\mathbf{H}$. Therefore, we can observe that, while the first difference
 between the nodes of the two graphs arises after $r-1$ iterations, the diffusion of such information
to all the nodes takes additional $r$ steps. Obviously, a similar conclusion can be derived also considering the unfolding equivalence and the depths of the unfolding trees.

An example that illustrates this situation is depicted in Figure~\ref{fig:counterexample}.
The two graphs in (a) and (b) 
 have been proposed in~\cite{krebs2015universal} and satisfy the lower bound of point 2 of 
 Theorem~\ref{th:treeDepth}. In the example, we assume that all the nodes have the same attributes, even if, for the sake of clarity, they are displayed with different symbols in terms of their "role" in the coloring scheme.
The graphs in (a) and (b) are constructed using  copies of the subgraph modules in (c), (d) and (e), which are merged in a sequence;
(a) and (b) are equal except at the top:  in (a), at the end of the sequence, there is a copy of  (d), while in (b) there is a copy of (e). 
The interesting case happens when the sequence is long enough so that $2r - 16\sqrt{r}>r$ holds.
In this  case,
we have the following situation: graphs (a) and (b) are distinguishable by the 1--WL test in less than $r$ steps; nevertheless, a number of steps $t >2r-16 \sqrt{r}>r$ is needed to distinguish the nodes $u$ and $v$.
Thus, intuitively, color refinement can recognize that (a) and (b)
are not isomorphic, but the detection of the difference occurs only 
when the information
about the asymmetry --- which is on one side of the sequence --- arrives to the other side of the sequence, where the different
modules have been placed. After that, the different modules have been detected and the information on their difference is propagated to the rest of the graphs in  a number of iterations proportional to the length of the sequences to
arrive back to nodes $u$ and $v$.

\begin{figure*}[ht]
\subfigure[]{
    \includegraphics[width=0.18\textwidth]{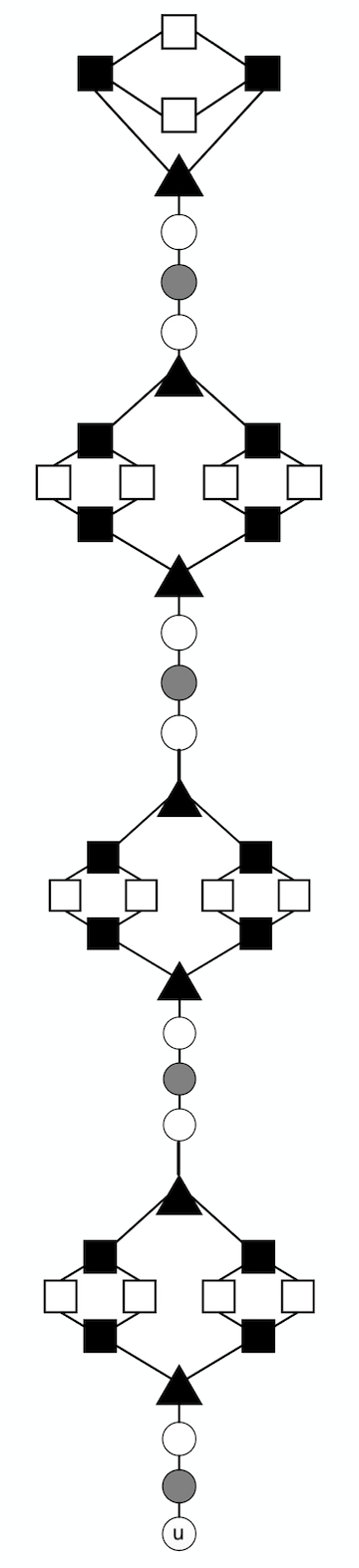}}
\subfigure[]{
    \includegraphics[width=0.18\textwidth]{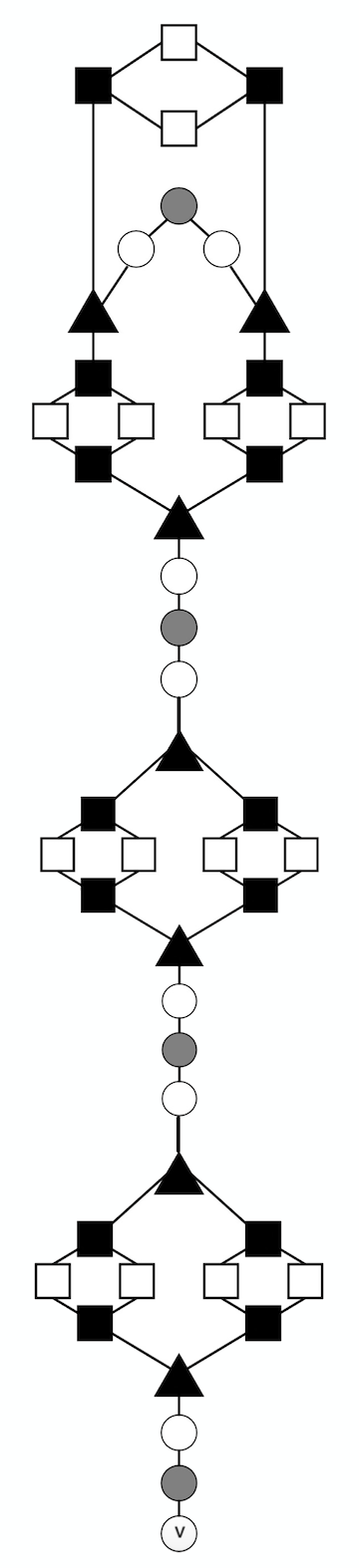}
}
\subfigure[]{
\includegraphics[width = 0.18\textwidth]{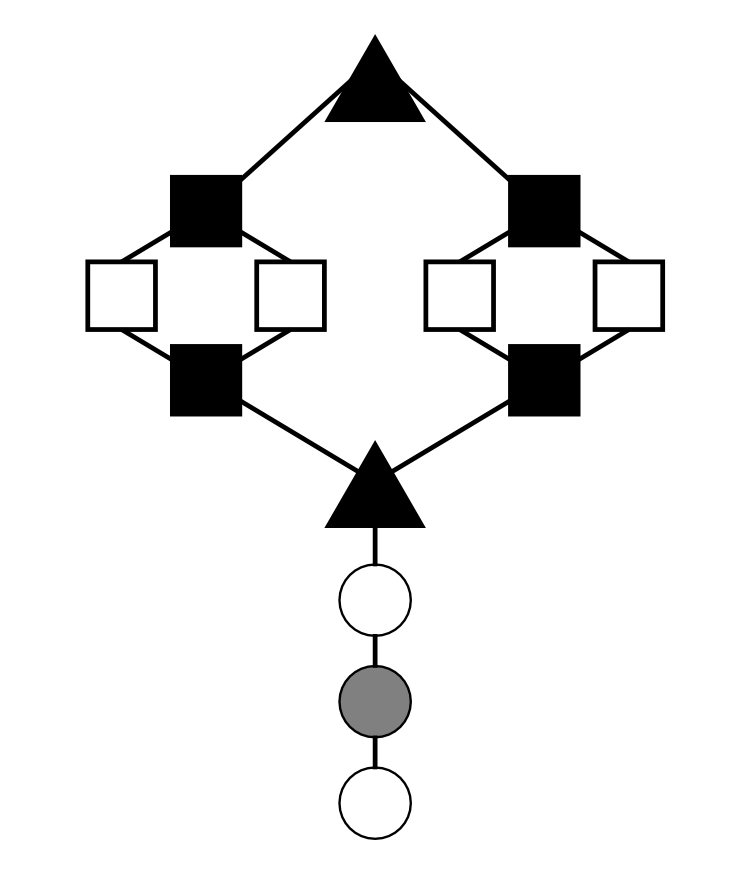}
}
\subfigure[]{
    \includegraphics[width=0.18\textwidth]{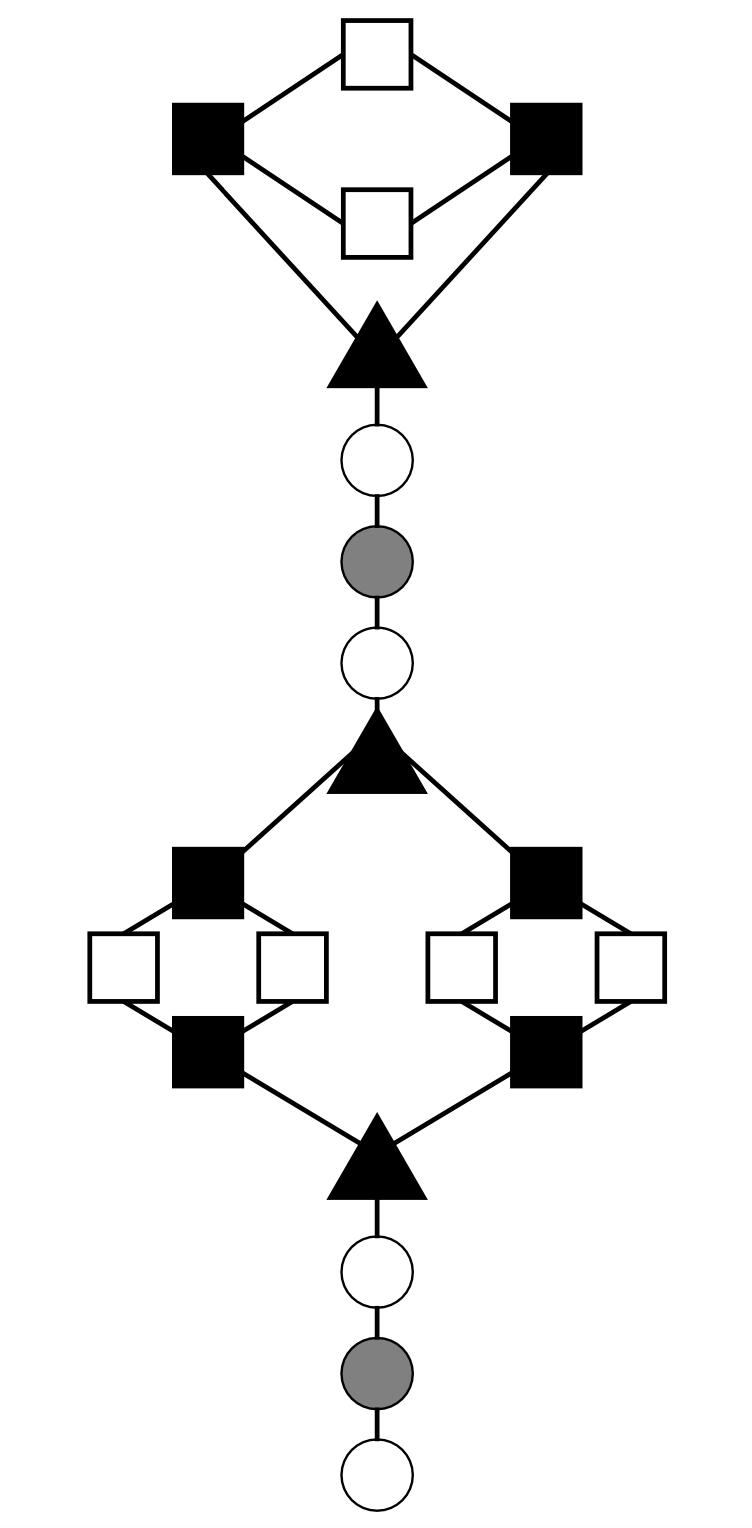}
}
\subfigure[]{
    \includegraphics[width=0.18\textwidth]{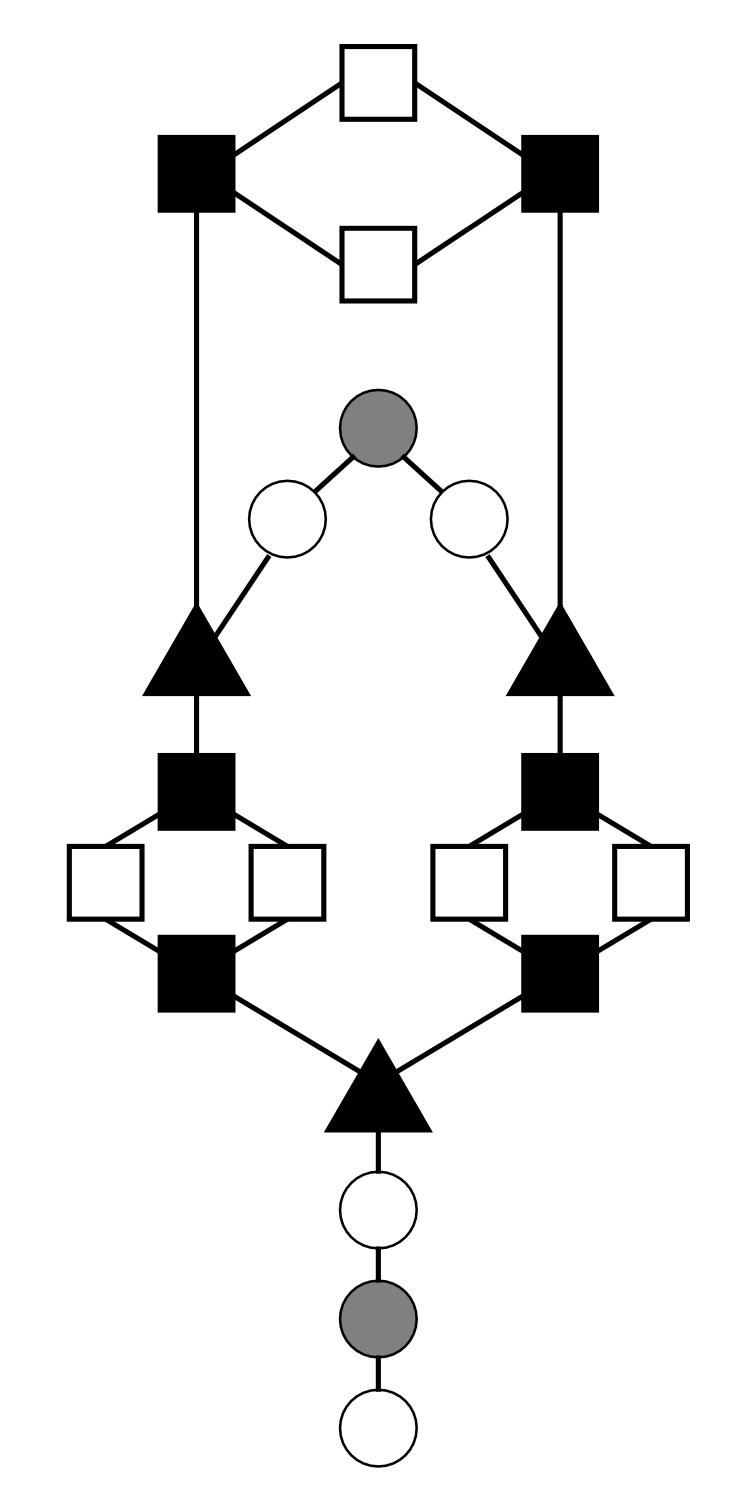}
}

\caption{In (a) and (b), two graphs $\mathbf{G}$, $\mathbf{H}$ are depicted that satisfy the lower bound of point 2 of of Theorem~\ref{th:treeDepth}. 
 We assume that all the nodes have the same attributes even if they are displayed with different symbols in terms of their "role" in the coloring scheme.
Graphs in (a) and (b) are constructed by aggregating in a sequence two copies of the same subgraph (c); then, module (d) is added at the top of graph (a), while module (e) is added at the top of graph (b).
It is worth noting that (a) and (b) \textit{do not} satisfy the relation $2r-16\sqrt{r}>r$; nevertheless, adding multiple times module (c) to the tail of both (a) and (b), we can find two graphs satisfying the requested relation. }
\label{fig:counterexample}
\end{figure*}

In order to formally link the concept of unfolding trees to the computational
capability of GNNs, let us now recall the definition of unfolding equivalence.

\begin{Def}\label{def:fun_unfold}
A function $f: \mathcal{D} \rightarrow \mathbb{R}^m$ is said to preserve the unfolding equivalence on $\mathcal{D}$ if
$v \sim u$ implies $f(\mathbf{G},v)=f(\mathbf{G}, u)$.
\hfill $\blacksquare$
\end{Def}

\vspace{.3cm}
The class of functions that preserve the unfolding equivalence on $\mathcal{D}$ will be denoted with $\mathcal{F}(\mathcal{D})$. A characterization of $\mathcal{F}(\mathcal{D})$ is given by the following result.

\begin{Thm}[Functions of unfolding trees] \label{f_unfold}
A function $f$ belongs to $\mathcal{F}(\mathcal{D})$ if and only if there exists a function $\kappa$, defined on trees, such that $f(\mathbf{G},v)= \kappa(\mathbf{T}^{2n-1}_v)$, for any node $v \in \mathcal{D}$.
\hfill \qed
\end{Thm}
A short, formal proof can be found in Appendix \ref{section_appendix}.

Theorem \ref{f_unfold} represents an improvement of the results reported in \cite{scarselli1998}; our contribution here is to show that, considering the unfolding tree down to the depth $2n-1$, we can provide the complete information on a graph to a function $f$ belonging to $\mathcal{F}(\mathcal{D})$. 

Note that Theorem \ref{f_unfold} suggests not only that the functions that compute the output on a node using unfolding trees preserve the unfolding equivalence, but also that the converse holds, namely all the functions that preserve the unfolding equivalence can be computed as functions of the unfolding trees. Since GNNs can implement only functions of the unfolding trees, we may expect that there is a tight relationship between what GNNs can do
and the class  $\mathcal{F}(\mathcal{D})$. Actually,  in \cite{Comp_GNN},
it has been shown that the OGNN model can approximate
in probability,  up to any degree of precision, any function in $\mathcal{F}(\mathcal{D})$ and a similar result will be derived for modern GNNs in this manuscript.

\subsection{Approximation capability}

The above discussion is about what GNNs cannot do, since we have proved that they are unable to distinguish nodes that originate equal unfolding trees.
Another obvious limit is that, at each node $v$, a GNN considers only the part of the graph that is reachable from $v$ and cannot implement any function depending on the information inaccessible from that node.
For this reason, for simplicity, we have decided to consider only connected graphs.
In this section, we pose our attention on two further questions that are related to each other, namely
 which functions can be approximated by GNNs and if there are any limitations other than that relating to the unfolding equivalence.

In order to address these issues, we consider the class of functions that preserve the unfolding equivalence (see Definition~\ref{def:fun_unfold}). The following theorem proves that GNNs can approximate in probability, up to any precision, any function of this
class, which means that GNNs are a sort of universal approximators on graphs, modulo the limitations due to the unfolding equivalence. 

\begin{Thm}[Approximation by GNNs] \label{main}
Let $\mathcal{D}$ be a  domain containing connected graphs with at most $r$ nodes.  For any measurable function $\tau \in \mathcal{F}(\mathcal{D})$ preserving 
the unfolding equivalence, any norm $\| \cdot \|$ on $\mathbb{R}$, and any probability measure $P$ 
on $\mathcal{D}$, there exists a GNN defined by the continuously differentiable functions $\text{COMBINE}^{(k)}$, $\text{AGGREGATE}^{(k)}$, $\forall k \leq r-1$, and by the function
$\text{READOUT}$, with feature dimension $m=1$ (i.e,  $h_v^k\in\mathbb{R}$), such that the function
$ \varphi$  (realized by the GNN) computed after $2r-1$ steps
satisfies the condition
\begin{equation*}
P( \| \tau(\mathbf{G},v)- \varphi( \mathbf{G},v) \| \leq \varepsilon) \geq 1- \lambda
\end{equation*}
for any reals $\epsilon, \lambda$, where $\epsilon >0$, $0 < \lambda < 1$.
\hfill \qed
\end{Thm}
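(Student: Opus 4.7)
The plan is to combine three ingredients: a measure-theoretic reduction to the continuous case via Lusin, the structural characterisation of $\mathcal{F}(\mathcal{D})$ via unfolding trees (Proposition~\ref{f_unfold}), and an explicit constructive encoding of the depth-$(r+1)$ unfolding trees into a single real feature (made sufficient by Theorem~\ref{diam}).

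First I would apply Lusin's theorem: since $\tau$ is measurable on the probability space $(\mathcal{D},P)$, there exists a compact set $K\subset\mathcal{D}$ with $P(K)\ge 1-\lambda$ on which $\tau$ is continuous. It is then enough to produce a GNN whose output approximates $\tau$ within $\varepsilon$ uniformly on $K$. By Proposition~\ref{f_unfold} one has $\tau(\mathbf{G},v)=\kappa(\mathbf{T}_v)$, and by Theorem~\ref{diam} this factors as $\tau(\mathbf{G},v)=\tilde\kappa(\mathbf{T}_v^{r+1})$ with $r=\max_{\mathbf{G}\in\mathcal{D}}\operatorname{diam}(\mathbf{G})$. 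The problem therefore splits into two pieces: (i) arranging, through $r+1$ applications of $\text{COMBINE}^{(k)}$ and $\text{AGGREGATE}^{(k)}$, that the scalar feature $h_v^{r+1}$ is an approximately injective encoding of $\mathbf{T}_v^{r+1}$ on $K$, and (ii) choosing $\text{READOUT}$ to approximate $\tilde\kappa$ composed with the inverse of that encoding on the resulting compact image.

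The encoding in (i) I would build by induction on $k$. At $k=0$, collapse the bounded label vector $\ell_v\in\mathbb{R}^{\mathbf{L}}$ into $\mathbb{R}$ by a smooth injective map on the compact region of labels realised in $K$. For $k\ge 1$, take $\text{AGGREGATE}^{(k)}$ to be a smooth symmetric near-injective map on bounded multisets of reals (for instance a sum-of-exponentials with base and scale tuned to the spacing of the previous-layer image), and take $\text{COMBINE}^{(k)}$ to be a smooth injection of $\mathbb{R}^2$ into $\mathbb{R}$ restricted to the relevant compact region. Compactness of $K$ keeps the image $\{h_v^k:(\mathbf{G},v)\in K\}$ bounded at every depth and bounds the node degrees, so at every step only a compact family of multisets needs to be separated, and uniform continuity propagates the per-layer error through all $r+1$ iterations. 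For (ii), the image $\{h_v^{r+1}:(\mathbf{G},v)\in K\}$ is a compact subset of $\mathbb{R}$ on which $\tilde\kappa$, pulled back through the encoding, is uniformly continuous; any standard universal approximation theorem for feed-forward nets then yields a $\text{READOUT}$ meeting the $\varepsilon$ target.

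The main obstacle is step (i): constructing continuously differentiable $\text{AGGREGATE}^{(k)}$ and $\text{COMBINE}^{(k)}$ that, with a single scalar feature, separate unfolding trees sharply enough that the cumulative error after $r+1$ iterations stays below the tolerance. In full generality no smooth injection of multisets of reals into $\mathbb{R}$ exists, so the argument must exploit the compactness of the domain: track the image of every layer, choose the encoding constants adaptively to its separation, and combine the resulting per-layer Lipschitz constants with the maximum degree and the modulus of continuity of $\tilde\kappa$ into one explicit $\varepsilon$ budget. Making these quantitative estimates line up while respecting the $m=1$ constraint is the technical core that distinguishes this constructive route from the non-constructive Stone--Weierstrass argument.
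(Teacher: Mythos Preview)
Your high-level decomposition---measure-theoretic reduction, factoring through depth-$(r+1)$ unfolding trees via Proposition~\ref{f_unfold} and Theorem~\ref{diam}, encoding trees into scalar features, decoding via $\text{READOUT}$---matches the paper's. But the paper does \emph{not} work on a continuum after the reduction. Instead of Lusin it invokes a hypercube-partition lemma (Lemma~\ref{hypercubes}, taken from \cite{Comp_GNN}) to cover $\mathcal{D}$ up to probability $1-\lambda$ by finitely many cells, each with a fixed graph structure and labels confined to a $\delta$-hypercube; it then replaces every label vector by the integer index of its cell, reducing Theorem~\ref{main} to an equivalent statement (Theorem~\ref{reduct}) about a \emph{finite} set of graphs with \emph{integer} features, the quantisation error being absorbed by equicontinuity of a continuous approximant to $\tau$. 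On that finite set the tree encoding $\triangledown:\mathcal{T}^{r+1}\to\mathbb{R}$ is purely set-theoretic---any injection of finitely many trees into $\mathbb{R}$ works---and $\text{AGGREGATE}^{(k)}$, $\text{COMBINE}^{(k)}$ are specified on those finitely many inputs and extended to $C^1$ by interpolation. No per-layer error budget, adaptive constants, or Lipschitz bookkeeping is needed at all.

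Your Lusin route leaves the label space a continuum, and this is where the $m=1$ constraint creates a genuine gap. Already at layer~$0$ you ask for a smooth injection of a compact region of label vectors in $\mathbb{R}^{\mathbf{L}}$ into $\mathbb{R}$; if that region has nonempty interior and $\mathbf{L}>1$, no continuous injection exists (invariance of domain). The same obstruction recurs at each $\text{AGGREGATE}^{(k)}$: multisets of size $d$ drawn from an interval form a $d$-dimensional space, so a single smooth symmetric real-valued function---one sum-of-exponentials, say---cannot separate them even approximately in any uniform sense, and ``tuned to the spacing of the previous-layer image'' has no meaning when that image is a continuum with zero spacing. To make your scheme work you would have to discretise somewhere, and once you do you are back to the paper's finite reduction, after which the iterative error analysis you flag as ``the technical core'' simply disappears.
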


Theorem~\ref{main} intuitively states that, given a function $\tau$, there exists a GNN that can approximate it. 
$\text{COMBINE}^{(k)}$ and $\text{AGGREGATE}^{(k)}$ can be any continuously differentiable function, while no assumptions are made on $\text{READOUT}$. This situation does not correspond to practical cases, where the GNN adopts
particular architectures and those functions are realized by neural networks or, more generally,
parametric models --- for example made of layers of sums, max, average, etc.
Therefore, it is of fundamental interest to clarify whether the theorem still holds
when the components $\text{COMBINE}^{(k)}$, $\text{AGGREGATE}^{(k)}$ and $\text{READOUT}$ are  parametric models. 

Let us now study the case when the employed 
components are sufficiently general to be able to approximate any function.
We call $\cal Q$ this class of networks, which corresponds to GNN models with universal components. In order to simplify our discussion, we introduce the transition function $f^{(k)}$  to indicate the stacking of the $\text{AGGREGATE}^{(k)}$ and $\text{COMBINE}^{(k)}$, i.e.,
\begin{align*}
f^{(k)}(\mathbf{h}_v^k,\{\mathbf{h}^{k-1}_u, \; u \in ne[v]\})& =  \\
=\text{COMBINE}^{(k)}\big(\mathbf{h}^{k-1}_v, & \\
\text{AGGREGATE}^{(k)}\{\mathbf{h}^{k-1}_u, \; u \in ne[v]\}\big)\,.&
\end{align*}
Then, we can formally define the class $\cal Q$.
\begin{Def} \label{def:universal}
A class $\cal{Q}$  of GNN models is said to have \textit{universal components} if, for any any $\epsilon>0$
and any continuous target functions $\overline{\text{COMBINE}}^{(k)}$, $\overline{\text{AGGREGATE}}^{(k)}$,  $\overline{\text{READOUT}}$, 
 there exists a GNN belonging to $\cal{Q}$, with functions $\text{COMBINE}_w^{(k)}$, $\text{AGGREGATE}_w^{(k)}$, $\text{READOUT}_w$ and
parameters $w$  such that
\begin{equation*}
\left\|{\bar f}^{(k)}(\mathbf{h},\{ \mathbf{h}_1,\ldots,\mathbf{h}_s\}) - f_w^{(k)}(\mathbf{h},\{ \mathbf{h}_1,\ldots,\mathbf{h}_s\}) 
\right\|_\infty\leq \epsilon
\end{equation*}
\begin{equation*}
\left\| \overline{\text{READOUT}}( \mathbf{q})-\text{READOUT}_w( \mathbf{q})\right\|_\infty\leq \epsilon
\end{equation*}
holds, for any input values  $\mathbf{h}$, $\mathbf{h}_1,\ldots,\mathbf{h}_s$, $\mathbf{q}$.
Here, the transition functions ${\bar f}^{(k)}$ and $f_w^{(k)}$ are defined using
 the target functions  $\overline{\text{COMBINE}}^{(k)}$, $\overline{\text{AGGREGATE}}^{(k)}$, and the GNN functions $\text{COMBINE}_w^{(k)}$, $\text{AGGREGATE}_w^{(k)}$, respectively, and $\|\cdot\|_\infty$ is the infinity norm.
\hfill $\blacksquare$
\end{Def}

\vspace{.3cm}

The following result shows that Theorem~\ref{main} still holds even for GNNs with universal components.

\begin{Thm}\emph{\sc Approximation by neural networks\/}\label{mainNN}\\
	Let us assume that the hypotheses of Theorem~\ref{main} are fulfilled and 
	$\cal{Q}$ is a class of  GNNs with universal components.
	Then, there exists a parameter set $w$ and some functions 
	$\text{COMBINE}^{(k)}_w$, $\text{AGGREGATE}^{(k)}_w$, $\text{READOUT}_w$,  implemented 	by neural networks in $\cal{Q}$, such that the thesis of	Theorem~\ref{main} holds. 
\label{nntheo}
\hfill \qed
\end{Thm}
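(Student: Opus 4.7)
The plan is to bootstrap Theorem~\ref{main} by uniformly approximating its continuously differentiable components with neural networks from $\cal{Q}$, and to control error propagation through the $r+1$ message-passing layers. First, I would invoke Theorem~\ref{main} with tolerance $\varepsilon/2$ and failure probability $\lambda/2$ to obtain a reference GNN $\bar\varphi$ with continuously differentiable components $\overline{\text{COMBINE}}^{(k)}, \overline{\text{AGGREGATE}}^{(k)}, \overline{\text{READOUT}}$ (and feature dimension $m=1$) whose output approximates $\tau$ within $\varepsilon/2$ on a set of $P$-measure at least $1-\lambda/2$. It then suffices to build a GNN in $\cal{Q}$ whose output differs from $\bar\varphi$ by at most $\varepsilon/2$ on a further subset of $P$-measure at least $1-\lambda/2$; a union bound and the triangle inequality close the argument.

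Second, I would localize to a compact subdomain. Using inner regularity of $P$ and the assumed finiteness of every graph, I pick a bounded label box $B\subseteq\mathbb{R}^{{\mathbf L}}$ and a bound $N$ on the node count such that the set $\mathcal{D}_0$ of pairs $(\mathbf{G},v)$ with $|\mathbf{G}|\leq N$ and all labels in $B$ satisfies $P(\mathcal{D}_0) \geq 1-\lambda/2$. Since $\bar f^{(k)}$ is continuous and $\mathcal{D}_0$ is bounded, iterating the reference updates confines each target feature $\bar{\mathbf{h}}_v^k$ to a compact set $H_k\subseteq\mathbb{R}$ for every $k=0,1,\dots,r+1$; let $\tilde H_k$ be a fixed compact enlargement of $H_k$, which will serve as the domain on which uniform control is available.

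The core step is a forward induction on the layer index $k$ bounding the discrepancy $\eta_k := \sup \|\mathbf{h}_v^k - \bar{\mathbf{h}}_v^k\|$ over $\mathcal{D}_0$, where $\mathbf{h}_v^k$ denotes the features produced by the neural GNN. For a tolerance $\delta>0$ to be fixed later, the universal-components property lets me choose $f_w^{(k)}$ and $\text{READOUT}_w$ uniformly within $\delta$ of their targets on the compacts $\tilde H_{k-1}^{s+1}$ for all $s\leq N-1$ and on $\tilde H_{r+1}$. Since $\eta_0 = 0$ (features are initialized to labels identically in both GNNs), the triangle inequality together with the modulus of uniform continuity $\omega_k$ of $\bar f^{(k)}$ on $\tilde H_{k-1}^{s+1}$ gives the recursion $\eta_k \leq \omega_k(\eta_{k-1}) + \delta$. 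Because there are only $r+1$ layers and each $\omega_k$ is continuous with $\omega_k(0)=0$, iterating the recursion and finally absorbing $\omega_{\overline{\text{READOUT}}}(\eta_{r+1}) + \delta \leq \varepsilon/2$ is achievable by choosing $\delta$ small enough.

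The main obstacle is ensuring that the approximate features $\mathbf{h}_v^k$ never leave the enlarged compact set $\tilde H_k$, outside of which neither the universal-components bound nor the continuity moduli are effective. This is handled by fixing the enlargement radii of the $\tilde H_k$ in advance and then choosing $\delta$ so small that the inductive bound $\eta_k$ remains strictly below the chosen radius for all $k\leq r+1$; the finiteness of the number of iterations and the uniform bound $N$ on neighborhood cardinalities on $\mathcal{D}_0$ make this a compatible, uniform choice of $\delta$, which is the place where both the finite-diameter conclusion of Theorem~\ref{main} and the localization to $\mathcal{D}_0$ are essential.
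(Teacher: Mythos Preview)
Your proposal is correct and follows the same overall strategy as the paper: invoke Theorem~\ref{main} with a halved tolerance to obtain a reference GNN, localize to a bounded subdomain of high probability, uniformly approximate the reference components by networks in $\mathcal{Q}$, and propagate the approximation error forward through the $r+1$ layers.

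The one noteworthy technical difference is in how the layer-wise error is propagated. The paper exploits the fact that Theorem~\ref{main} produces \emph{continuously differentiable} components, so on the bounded domain each $\bar f^{(k)}$ has a Jacobian bound $B$; this yields the explicit Lipschitz recursion $\|\bar H^k - H_w^k\|_\infty \le \eta N \sum_{i=0}^{k-1} B^i$, which is clean and avoids any separate compact-containment argument. You instead work with moduli of uniform continuity $\omega_k$ and carry an explicit ``enlarged compact'' $\tilde H_k$ to ensure the approximate features stay where the uniform bounds apply. Your route is slightly more general (it would survive if Theorem~\ref{main} only guaranteed continuous components) and is more careful about the containment issue, which the paper treats informally; the paper's route is shorter because it cashes in the $C^1$ regularity it already has. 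Either way the argument closes, and your splitting of $\lambda$ into $\lambda/2+\lambda/2$ is a harmless variant of the paper's single use of $\lambda$ together with an informal ``the domain can be assumed bounded in probability''.
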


The proof of Theorem~\ref{mainNN} is included in the Appendix. However, some related topics are discussed below, to better understand some properties of GNNs.
\begin{itemize}
\item
In the proof of Theorem~\ref{main}, we first define an encoding function $\triangledown$ (see the Appendix) that  maps trees to real numbers. The functions	$\text{COMBINE}^{(k)}$ and $\text{AGGREGATE}^{(k)}$ are designed so that, at each step, the node feature vector approximates a coding of the unfolding function $\mathbf{h}_v^k=\triangledown(\mathbf{T}_v^k)$. The function  $\text{READOUT}$ decodes the  unfolding and produces the desired outputs.
\item
In the proof of Theorem~\ref{mainNN}, it is shown that Theorem~\ref{main} still holds even when the transition  and $\text{READOUT}$ functions are approximated. Thus, we can use any parametric model to implement those functions.  We can expect that, also for the GNNs of Theorem~\ref{mainNN}, the transition function stores into the feature vector
an approximate coding of the unfolding tree, while  $\text{READOUT}$ decodes such a coding and gives the desired outputs. Obviously, in a practical case, a GNN can store only useful information,
required to produce the output, and not just all the informative content of the unfolding trees.
\end{itemize}
\noindent
The following remarks may further help to understand our results.
\begin{itemize}
\item
\noindent {\it GNNs with universal components}.
Intuitively, the universality condition means that the architectures used to implement
 $f_w^{(k)}$ and $\text{READOUT}_w$  must be sufficiently  general to  be able to approximate any possible target function. From the theory of standard neural networks, 
those architectures must have at least two layers (one hidden and one output layer) \cite{scarselli1998}. Such a conclusion is similar to the one reported in  \cite{xu2018powerful}, 
where a related result is described and where it is 
suggested that, in order to be able to implement the 1--WL test, the GNN must use a two layer transition function. Indeed, in this way, the GNN can implement
an injective encoding of the input graph into the node features.
Nonetheless, the proposed result is slightly different with respect to the one reported in \cite{xu2018powerful} as, in theory, the encoding may fail to be injective, provided
that the approximation remains sufficiently good in probability. However, the conclusion about the architecture still holds.

GNNs with transition functions  $f_w^{(k)}$ exploiting two layer architectures include Graph Isomorphism Networks (GINs)  \cite{xu2018powerful}, which were claimed to realize an injective encoding.  Similarly, also the OGNN model, for which a result similar to Theorem~\ref{main} was proved, adopts a two layer architecture for the transition function: in this case,  $\text{AGGREGATE}^{(k)}_w$  consists of a MultiLayer Perceptron (MLP)  with a hidden layer and $\text{COMBINE}^{(k)}_w$  was implemented by a sum. 
Similar results have been devised also in ~\cite{azizian2020expressive}, where a different version of the $\text{COMBINE}^{(k)}_w$ function has been modeled as a sum of MLPs.

\item {\it $\text{READOUT}$ universality.}
The condition on the universality of the   $\text{READOUT}$ function can be relaxed, provided that a higher dimension for the feature vector is used, namely $m>1$. $\text{READOUT}_w$ can indeed cooperate with the transition function in order to produce the output. In the limit case, the output can be completely prepared by the transition function and stored in some components of $\mathbf{h}_v^K$ so that  $\text{READOUT}_w$ is just a projection function.
\item {\it GNN architectures that are not universal approximators}. Most of GNN models,  e.g. Graph Convolutional Neural Networks, GraphSAGE and so on,  use a single layer architecture to implement the transition function. Thus, even if they do employ universal components, such as those specified by Definition 
\ref{def:universal}, they have a limited computational power with respect to two layer architectures and this is supported by theoretical results. In~\cite{xu2018powerful}, Lemma 7,  it is shown that, if the transition function is made up by a single layer with ReLU activation functions, the encoding function cannot be injective. A similar result was obtained for linear recursive neural networks~\footnote{Recursive neural networks~\cite{sperduti1997} are the ancestors of GNNs and assume that the input graph is  acyclic.} in~\cite{bianchini2001}. 
However, in general, it is not correct to assert that GNNs with single layer transition functions cannot be universal approximators for functions on graphs, as this property depends on 
the used GNN model and on other architectural/training details. For example,  a GNN model with a single layer transition component can use several iterations of Eq.~(\ref{Def_GNN}) to emulate a GNN with a deeper transition component. In the former model, the node features emulate  the 
transition network hidden layers and $\text{COMBINE}$  must contain a self--loop, namely
must have access to the previous features of each node.
\item {\it Feature dimension.} Surprisingly, Theorems~\ref{main} and~\ref{mainNN} suggest
that a feature vector of dimension $m=1$ is enough to establish the universal approximation capability of GNNs.
It is obvious, however, that the dimension of the feature vector plays an important role in determining
the complexity of the coding function for a given domain. We expect that the larger the dimension,
the smaller the complexity of the coding. This complexity, in turn, affects
the complexity of the transition function, the difficulty in learning such a function, 
the number of patterns required for training the GNN and so on.
\item {\it Number of steps.} Theorems~\ref{main} and~\ref{mainNN} suggest that $2r-1$ steps are enough to  approximate any function. Such a result is a consequence of Theorems~\ref{th:treeDepth}
and~\ref{f_unfold}.  Intuitively, this bound can be explained reusing the discussion on Theorem~\ref{th:treeDepth}. A GNN can employ up to $r-1$ iterations/layers to diffuse all the information
from one node to any other node with the message passing mechanism. 
After $r-1$ iterations, the information stored in a node provides
a sort of signature for that node, which may allow to distinguish some nodes from 
others. Yet, such a signature is not complete, since the first time
a node ``communicate'' with another has no information about itself. Adding $r$ iterations/layers allows nodes to communicate again and exchange their current signatures
to produce more accurate signatures. It is worth noting that this reasoning provides also an intuitive
explanation about why graph regression/classification tasks differ from node tasks. In graph tasks, the 
GNN  uses a $\text{READOUT}$  function that aggregates the features of all the nodes in the graph,
and possibly can do the work required by the second diffusion phase.
  In node tasks, $\text{READOUT}$   operates only on a single node, so that the second 
diffusion phase is mandatory.
\item  {\it The same $\text{COMBINE}$ and $\text{AGGREGATE}$ can be used for all the layers.}
Even if, for clarity, in our theoretical analysis, we focus on the GNN model that is the most used and exploits different functions in each layer $k$, 
our proofs do not exploit such a characteristic. Therefore, all the results hold also
for those GNNs --- sometimes called \textit{recursive} --- using the same $\text{COMBINE}$ and $\text{AGGREGATE}$ functions on each layer.
\end{itemize}

Note that, throughout the manuscript, we have used the idea that the unfolding tree represents the information available to a GNN to compute its output, and we have mentioned that a similar approach has been applied by other authors as well.
From a formal point of view,   Theorems~\ref{main} defines a method by which a GNN can actually encode an unfolding tree into the node features,
so that it has been proved that all the information collected into the unfolding trees can be used  by GNNs. 
However, also the reverse implication holds true, that is
a GNN cannot encode more information into features than that contained into the unfolding trees. 
Indeed, this is a
consequence of the fact that GNNs have no greater discriminatory capability than the 1--WL test (see~\cite{morris2019}, Theorem 1). Therefore, the unfolding trees totally collect the information used by a GNN.

Finally, the following corollary provides an alternative way to describe the approximation ability of GNNs as a function of their unfolding trees. 


\begin{Cor}
The class of functions implemented by a GNN with universal components is dense in probability in the $\mathcal{F}(\mathcal{D})$ class of functions that preserve the unfolding equivalence
in the domain $\mathcal{D}$ of connected graphs.
\qed
\end{Cor}

\section{Experimental Validation}\label{section_experiments}
In this Section, we support our theoretical findings with a set of experiments. For this purpose, we show that a GNN can approximate a function $F_{WL}:\mathcal{G}\rightarrow{\mathbb{N}}$ that models the 1--WL test. Indeed, the function $F_{WL}$ assigns to each graph a target label that represents the class of equivalence of the 1--WL. For simplicity, we only focus on the ability of the GNN to approximate this function, so that only the training performance is considered, i.e., we do not investigate its generalization capability over a test set. 
Since the 1--WL test provides the finest partition of graphs reachable by a GNN, the mentioned task experimentally establishes the expressive power of GNNs.

\paragraph{Dataset}
The graph datasets used for the experiments are derived from the QM9 molecules dataset \cite{ruddigkeit2012enumeration,ramakrishnan2014quantum}. Specifically, the subsets of molecules that compose our dataset are selected as follows:
\begin{itemize}
    \item Homogeneous features are assigned to all nodes of all graphs in QM9, as we are interested in evaluating the approximation ability of GNNs based only the graph topology;
    \item The 1--WL test is run all over the entire QM9 dataset for $k=4$ iterations, and for each graph, the target is the corresponding 1--WL output, represented as a natural number;
    \item We select the color classes containing more than $T$ graphs, where $T$ is a fixed threshold.
\end{itemize}
For training purposes, the targets are normalized between 0 and 1 and spaced uniformly in the range $[0,1]$.
Therefore, the distance between each class label is $d=\frac{1}{\text{num\_classes-1}}$. A graph $G$ with target $y_G$ will be said to be correctly classified if, given $\mathsf{out} = \text{GNN}(G)$, we have $|\mathsf{out}-y_G|< d/2 $.
\paragraph{Experimental setup}
The GNN used in the experiments is the Graph Isomorphism Network (GIN) \cite{xu2018powerful}.
A GIN  computes
\begin{equation}
    \mathbf{h}_v^{(t)} = \text{MLP}\big((1+\epsilon)\mathbf{h}_v^{(t-1)} + \sum \limits_{u \in ne_v} \mathbf{h}_u^{(t-1)}\big),
\end{equation}
where the attention parameter $\epsilon$ is either a trainable parameter or a fixed scalar. In our setting, we fix $\epsilon=0$. 
It has been proven that GINs  can implement 1-WL test and produce a  different  representation for each graph that can be distinguished by 1-WL test, ~\cite{xu2018powerful}.
Thus, GINs, with an appropriate READOUT,  can approximate any function on graphs preserving the unfolding equivalence. The MLP in a GIN layer has one hidden layer with $h_{\mathsf{gin}} $ neurons;
 the dimension of the GIN features is $h_{\mathsf{gin}} $ as well.
 The MLP in a GIN layer implements the hyperbolic tangent as activation function, and batch normalization. The GIN includes  $n_{\max}=k$ layers, so as $k$ is the number of  iterations performed by 1--WL to generate the targets. After the last GIN layer, the READOUT function is implemented performing a global aggregation, after which a linear layer  $W_{\mathsf{gin\_ out}}$ of size $h_{\mathsf{gin}} \times 1$ is added; eventually, a sigmoid activation function is applied.
The model is trained over $500$ epochs using the Adam optimizer with an initial learning rate $\lambda = 10^{-3}$. 
We carried out the experiments as follows.
\begin{itemize}
    \item In the first experimental setting, we evaluate the GNN performance for different values
    of the threshold $T$, which affects the cardinality of the  training set and its 1--WL color classes. The values of the threshold $T$ are taken in the integer interval $[30,45]$, the hidden layer of the MLP has dimension $h_{\mathsf{gin}} = 64$.
    \item In the second experimental setting, we evaluate the GNN expressive power varying both the number of neurons in the GIN MLP and the size of the hidden features, which, as specified above, are kept equal.
In these experiments,  the threshold is  fixed as $T=35$, the hidden layer sizes $h_{\mathsf{gin}}$ are taken from the list $[4,8,16,32,64]$.
\end{itemize}
Each experiment is statistically evaluated over $15$ runs. The overall training is  performed on an Intel(R) Core(TM) i7-9800X processor running at 3.80GHz, using 31GB of RAM and a GeForce GTX 1080 Ti GPU unit.
\begin{figure*}[tb!]
\centering
         \subfigure[]
        {\includegraphics[width=0.43\textwidth]
        {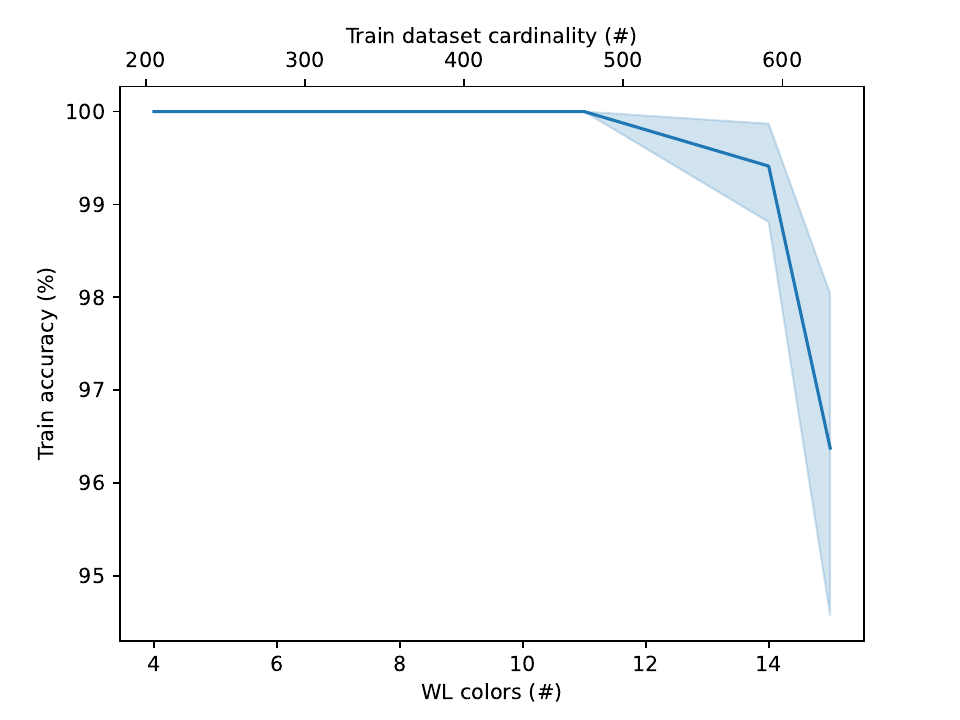} }
    \subfigure[]
        {\includegraphics[width=0.4\textwidth]{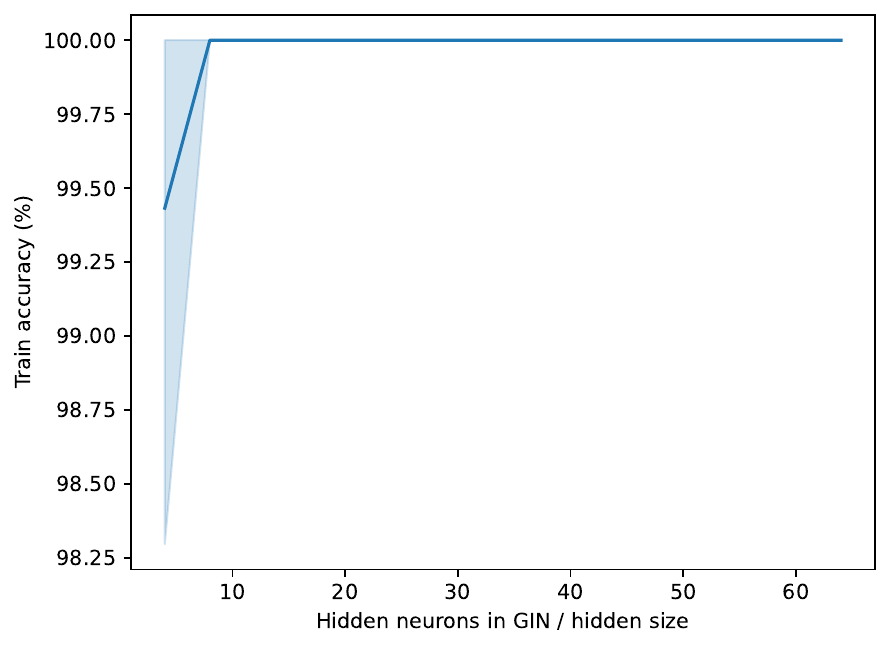}}
     \caption{Training accuracy on subsampled QM9 datasets, increasing number of WL colors (a), and increasing hidden layer size (b). The solid line represents the average over 15 runs, the shaded area represents the confidence interval.}
     \label{pic:experiments_acc}      
\end{figure*}
The code developed to run the experiments can be found at \url{https://github.com/AleDinve/static-gnn}.
\paragraph{Results}
Our experimental results are summarized in Figure \ref{pic:experiments_acc}.
Figure \ref{pic:experiments_acc} (a) shows the evolution of the training accuracy for different   numbers of WL colors;
Figure \ref{pic:experiments_acc} (b) displays the evolution of the training accuracy for  increasing  numbers of hidden neurons in the GIN MLP.\\
In both experiments the average training accuracy is never less than 96\%.
Moreover, in at least one of the 15 runs per value, 100\% training accuracy is reached. These results confirm the approximation power of GNNs equipped with a sufficiently general components.\\ 

\section{Conclusion}\label{ConcFut}
In this paper,  we have shown that GNNs can approximate, in probability, any function that preserves the unfolding equivalence (i.e., that passes the 1--WL test). Our proof improves on existing results both because it applies to node classification/regression tasks and because it is more general, since it holds for measurable functions.  Moreover, by using our theoretical framework, we have provided details on the GNN architectures that can reach a given approximation, including the number of iteration/layers,
the state dimension and the architecture of $\text{AGGREGATE}^{(k)}$, $\text{COMBINE}^{(k)}$ and  $\text{READOUT}$ networks. 

Future developments may include further extensions of our results beyond the traditional 1--WL domain and covering GNN models not considered by the framework used in this paper. For instance, it would be interesting to characterize the class of \textit{node--focused} functions learnable by a specific GNN model in terms of the isomorphism--wise test paradigm on which it has been built (see \cite{bodnar2021weisfeilera,bodnar2021weisfeilerb} for examples of GNNs built following isomorphism test mechanisms different from the 1--WL test).
Moreover, the proposed results are mainly focused on the expressive power of GNNs, but GNNs with the same expressive power may differ for other fundamental properties, e.g., the computational
and memory requirements and the generalization capability (that can be measured through well--established metrics, such as Rademacher complexity and VC--dimension, as pointed out in Section \ref{rel_work}, or evaluated in terms of neurocognitive task learning \cite{brugiapaglia2020generalizing,brugiapaglia2022invariance,d2023generalization}). Understanding how the architecture of  $\text{AGGREGATE}^{(k)}$, $\text{COMBINE}^{(k)}$ and  $\text{READOUT}$   impacts
on those properties is of fundamental importance for practical applications of GNNs.



\section*{Declarations}
\begin{itemize}
\item Funding: Giuseppe Alessio D'Inverno and Maria Lucia Sampoli are partially supported by INdAM GNCS group. Monica Bianchini and Maria Lucia Sampoli are partially supported by the PNRR Project "THE - Tuscany Health Ecosystem", CUP: B83C22003920001.
\item Conflict of interest/Competing interests: The authors have no relevant financial or non-financial interests to disclose.
\item Ethics approval: not applicable
\item Consent to participate: not applicable
\item Consent for publication: not applicable
\item Availability of data and materials: Synthetic data have been downloaded from the Pytorch Geometric repo available at \url{https://deepchemdata.s3-us-west-1.amazonaws.com/datasets/'
               'molnet_publish/qm9.zip}
\item Code availability: Code has been made available in the GitHub repo \url{https://github.com/AleDinve/static-gnn}.
\end{itemize}

\appendix
\section{Appendix}\label{section_appendix}

\section*{Proof of Theorems \ref{equiv_theo1} and \ref{equiv_fin}}

Since both unfolding equivalence and color equivalence have been described using a node-localized recursive definition, it is natural to investigate the possible connections between these two equivalence relations. Indeed, in the following, we show that they are equivalent on a domain of graphs with node features, i.e. that define the same relationship between nodes.

To prove Theorems \ref{equiv_theo1} and \ref{equiv_fin}, the following lemma is required.

\begin{Lem}\label{Lem_first}
\hspace{0.5 cm}\\
Let $\mathbf{G}=(\mathbf{V},\mathbf{E})$ be a graph and let $u,v \in \mathbf{V}$, with features $\ell_u,\, \ell_v$. Then,   $\forall t \in \mathbb{N}$
\begin{equation}
\mathbf{T}_u^t=\mathbf{T}_v^t \; \; \text{iff} \; \;  c_u^{(t)}=c_v^{(t)}
\label{eq_lem_first}
\end{equation}
where $c_u^{(t)}$ and $c_v^{(t)}$ represent the node coloring of $u$ and $v$ at time $t$, respectively.
\end{Lem}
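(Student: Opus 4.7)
The plan is to proceed by induction on $t \in \mathbb{N}$, proving the biconditional (\ref{eq_lem_first}) simultaneously for \emph{every} pair of nodes in $\mathbf{V}$, not just the fixed pair $(u,v)$. The base case $t=0$ is immediate from the definitions: the depth-$0$ unfolding tree $\mathbf{T}_u^0 = \text{Tree}(\ell_u)$ is a single node carrying $\ell_u$, while the initial color $c_u^{(0)} = \text{HASH}_0(\ell_u)$ is obtained through a bijection on features; thus both $\mathbf{T}_u^0 = \mathbf{T}_v^0$ and $c_u^{(0)} = c_v^{(0)}$ reduce to the single condition $\ell_u = \ell_v$.

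For the inductive step, assume (\ref{eq_lem_first}) holds at level $t$ for every pair of nodes. In the forward direction, if $\mathbf{T}_u^{t+1} = \mathbf{T}_v^{t+1}$, then truncating the two trees to depth $t$ gives $\mathbf{T}_u^{t} = \mathbf{T}_v^{t}$, and by the inductive hypothesis $c_u^{(t)} = c_v^{(t)}$. The equality at depth $t+1$ also provides a bijection $\sigma : ne[u] \to ne[v]$ with $\mathbf{T}_n^t = \mathbf{T}_{\sigma(n)}^t$ for every $n \in ne[u]$; the inductive hypothesis, applied to each matched pair, then shows that the multisets $\{c_n^{(t)} : n \in ne[u]\}$ and $\{c_n^{(t)} : n \in ne[v]\}$ coincide. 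Feeding the identical pair $(c_u^{(t)}, \{c_n^{(t)} : n \in ne[u]\})$ into the injective $\text{HASH}$ yields $c_u^{(t+1)} = c_v^{(t+1)}$. The reverse direction is symmetric: injectivity of $\text{HASH}$ recovers $c_u^{(t)} = c_v^{(t)}$ together with equality of the multisets of neighbor colors; the first, via the inductive hypothesis, gives $\mathbf{T}_u^{t} = \mathbf{T}_v^{t}$ (hence in particular $\ell_u = \ell_v$ from the root labels), while the second yields a bijection $\sigma$ that, again by the hypothesis, matches the depth-$t$ neighbor subtrees. Reassembling these ingredients according to the recursive definition of the unfolding tree gives $\mathbf{T}_u^{t+1} = \mathbf{T}_v^{t+1}$.

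The only real obstacle is the translation between multisets of trees and multisets of colors: an equality of multisets must be realized by a concrete bijection before the inductive hypothesis can be invoked element-by-element, and this is precisely why the induction must range over \emph{all} node pairs at each level $t$ rather than being specialized to $(u,v)$ --- the hypothesis is applied to the matched pairs $(n,\sigma(n))$, which are generally different from $(u,v)$. Once this bookkeeping is in place, the proof is driven entirely by the injectivity of $\text{HASH}_0$ and $\text{HASH}$ together with the parallel recursive structure of $\mathbf{T}_v^t$ and $c_v^{(t)}$.
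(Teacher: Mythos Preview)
Your proof is correct and follows essentially the same approach as the paper: induction on $t$, with the base case handled via the bijectivity of $\text{HASH}_0$, and the inductive step driven in both directions by the injectivity of $\text{HASH}$ together with the parallel recursive structure of unfolding trees and WL colors. Your explicit remark that the induction must range over all node pairs (so that the hypothesis applies to the matched neighbor pairs $(n,\sigma(n))$) is a point the paper leaves implicit, but otherwise the arguments are the same.
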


\begin{proof}
The proof is carried out  by induction on $t$, which represents both the depth of the unfolding trees and the iteration step in the WL colouring.\\
For $t=0$, $\mathbf{T}_u^0= \text{Tree}(\ell_u)= \text{Tree}(\ell_v)= \mathbf{T}_v^0$ if and only if $\ell_u=\ell_v$ and $c_u^{(0)}=\text{HASH}_0(\ell_u)=\text{HASH}_0(\ell_v)=c_v^{(0)}$. Let us suppose that Eq. (\ref{eq_lem_first}) holds for $t-1$, and prove that it holds also for $t$.
\begin{itemize}
\item[($\rightarrow$)] Assuming that $\mathbf{T}_u^t=\mathbf{T}_v^t$, we have  
\begin{equation}\label{node}
\mathbf{T}_u^{t-1} = \mathbf{T}_v^{t-1}
\end{equation}
and
\begin{equation}\label{tree_eq}
\text{Tree}(\ell_u, \mathbf{T}_{ne[u]}^{t-1})=\text{Tree}(\ell_v, \mathbf{T}_{ne[v]}^{t-1})
\end{equation}
 By induction, Eq. (\ref{node})  is true if and only if 
\begin{equation}\label{col_node}
c_u^{(t-1)}=c_v^{(t-1)}
\end{equation}
Eq. (\ref{tree_eq}) implies that
$\ell_u=\ell_v$
and
$\mathbf{T}_{ne[u]}^{t-1}=\mathbf{T}_{ne[v]}^{t-1}$,
which means that an ordering on $ne[u]$ and $ne[v]$ exists s.t.
\begin{equation} \label{neig}
T_{ne_i(u)}^{t-1}=T_{ne_i(v)}^{t-1} \; \forall\  i=1, \dots, |ne[u]|
\end{equation}
Hence, Eq. (\ref{neig}) holds if and only if an ordering on $ne[u]$ and $ne[v]$ exists s.t.
\begin{equation*}
c_{ne(u)_i}^{t-1}=c_{ne(v)_i}^{t-1}\; \forall i=1, \dots, |ne[u]|
\end{equation*}
that is
\begin{equation}\label{col_neig}
\{ c_m^{(t-1)}| m \in ne[u] \}=\{ c_n^{(t-1)}| n \in ne[v] \}
\end{equation}
Putting together Eqs. (\ref{col_node}) and (\ref{col_neig}), we obtain:
\begin{equation*}
\text{HASH}(c_u^{(t-1)},\{ c_m^{(t-1)}| m \in ne[u] \} )=
\end{equation*}
\begin{equation*}
\text{HASH}(c_v^{(t-1)},\{ c_n^{(t-1)}| n \in ne[v] \} ).
\end{equation*}
which implies that
$c_u^{(t)}=c_v^{(t)}$.
\item[($\leftarrow$)]The proof of the converse implication follows a similar reasoning,
but some different steps are required in order to reconstruct the unfolding equivalence from the 
equivalence based on the 1--WL test.

Let us assume that $c_u^{(t)}=c_v^{(t)}$; by definition,

\begin{equation} \label{rev_col}
\text{HASH}(c_u^{(t-1)},\{ c_m^{(t-1)}| m \in ne[u] \} )=
\end{equation}
\begin{equation*}
\text{HASH}(c_v^{(t-1)},\{ c_n^{(t-1)}| n \in ne[v] \} )
\end{equation*}

Being the HASH function bijective, Eq. (\ref{rev_col}) implies that:

\begin{equation}\label{rev_col_node}
c_u^{(t-1)}=c_v^{(t-1)}
\end{equation}

and

\begin{equation}\label{rev_col_neig}
\{ c_m^{(t-1)}| m \in ne[u] \}=\{ c_n^{(t-1)}| n \in ne[v] \}
\end{equation}

Eq. (\ref{rev_col_node}) is true if and only if, by induction, 

\begin{equation}\label{rev_node}
\mathbf{T}_u^{t-1}=\mathbf{T}_v^{t-1}
\end{equation}

which implies

\begin{equation}\label{lab_eq}
\ell_u=\ell_v
\end{equation}

Moreover, Eq. (\ref{rev_col_neig}) means that an ordering on $ne[u]$ and $ne[v]$ exists such that
\begin{equation}\label{rev_col_sing}
c_{ne(u)_i}^{t-1}=c_{ne(v)_i}^{t-1}\; \forall i=1, \dots, |ne[u]|
\end{equation}
Instead, by induction, Eq. (\ref{rev_col_sing}) holds if and only if an ordering on $ne[u]$ and $ne[v]$ exists so as
$T_{ne_i(u)}^{t-1}=T_{ne_i(v)}^{t-1} \; \forall\  i=1, \dots, |ne[u]|$, i.e.
\begin{equation}\label{rev_neig}
\mathbf{T}_{ne[u]}^{t-1}=\mathbf{T}_{ne[v]}^{t-1}
\end{equation}

Finally, putting together Eqs. (\ref{lab_eq}) and (\ref{rev_neig}), we obtain

\begin{equation*}
\text{Tree}(\ell_u, \mathbf{T}_{ne[u]}^{t-1})=\text{Tree}(\ell_v, \mathbf{T}_{ne[v]}^{t-1})
\end{equation*}
that means $\mathbf{T}_u^t=\mathbf{T}_v^t$.
\end{itemize}
\end{proof}

Theorem \ref{equiv_theo1} is therefore proven, as its statement just rephrases the statement of Lemma \ref{Lem_first} in terms of the equivalence notation. Theorem \ref{equiv_fin} is the natural extension of Theorem \ref{equiv_theo1} to graphs.

\section*{Proof of Theorem \ref{th:treeDepth}}

In order to prove the theorem,  we introduce the concept of \textit{universal covering}, first presented in \cite{krebs2015universal}, which allows us to derive useful properties on the unfolding trees
(see \cite{krebs2015universal} for more details).

Let $\mathbf{G}=(\mathbf{V},\mathbf{E})$.
 Given a graph $\mathbf{H}=(\mathbf{V}',\mathbf{E}')$ and a homomorphism $\alpha$ from $\mathbf{H}$ to $\mathbf{G}$, if:
 \begin{itemize}
     \item $\alpha$ is a bijection from $ne(v)$ onto $ne(\alpha(v))$
      \item $f_v(v) = f_v(\alpha(v))$
      \item $f_v(u) = f_v(\alpha(u))$ $\forall u \in ne(v)$
      \end{itemize}
for all $v \in \mathbf{V}'$, then $\alpha$ is called an \textit{attributed covering map} and $\mathbf{H}$ is called a \textit{covering graph}.
Given a connected graph $\mathbf{G}$ and a vertex $x \in \mathbf{V}$, let us define a graph $\mathbf{U}_x(\mathbf{G})$ as follows. The vertex set of $\mathbf{U}_x(\mathbf{G})$ consists of all non--backtracking walks in $\mathbf{G}$ starting at $x$, that is, of sequences $(x_0, x_1, \dots , x_k)$ such that $x_0 = x$, $x_i$ and $x_{i+1}$ are adjacent, and $x_{i+1} \neq x_{i-1}$. 
Two such walks are \textit{adjacent} in $\mathbf{U}_x(\mathbf{G})$ if one of them extends the other by one component, that is, one is $(x_0, \dots, x_k, x_{k+1}) $ and the other is $(x_0, \dots, x_k)$. $\mathbf{U}_x(\mathbf{G})$ is a tree and $\gamma_G$ defined as $\gamma_G(x_0, \dots, x_k, x_{k+1}) = x_k$ is a covering map from $\mathbf{U}_x(\mathbf{G})$ to $\mathbf{G}$.
We call $\mathbf{U}$ an \textit{attributed universal cover} of $\mathbf{G}$ if $\mathbf{U}$ covers any covering graph of $\mathbf{G}$. Therefore, $\mathbf{U}_x(\mathbf{G})$ is an attributed universal cover of $\mathbf{G}$.

\begin{Rem}
    Given that we are dealing with attributed graphs, we will drop the "attributed" adjective from now on, to make the notation lighter.
\end{Rem}

The next lemma, which is proved in ~\cite{krebs2015universal},  shows the bijective correspondence between universal coverings and colors up to a certain depth/iteration.

\begin{Lem}\label{equiv_universal}{\cite{krebs2015universal}}
    Let $\mathbf{U}$ and $\mathbf{W}$ be universal covers of graphs $\mathbf{G}$ and $\mathbf{H}$, respectively. Furthermore, let $\alpha$ be a covering map from $\mathbf{U}$ to $\mathbf{G}$ and $\beta$ be a covering map from $\mathbf{W}$ to $\mathbf{H}$. Let $x \in \mathbf{V}(\mathbf{U})$ and $y \in \mathbf{V}(\mathbf{W})$, and let $u = \alpha (x) $ and $v = \beta (y)$. Then, for any $t$, $\mathbf{U}_x^t \cong \mathbf{W}_y^t$ if and only if $c^{(t)} (u) = c^{(t)} (v)$.
\end{Lem}

\begin{Rem}
    We can always identify the node $x$ from a covering $\mathbf{W}_x$ of a graph $\mathbf{H}$ with its mapping $u$ via $\alpha$; i.e., $x=u$. 
    This allows us to restate the previous bijection as: $\mathbf{U}_u^t \cong \mathbf{W}_v^t$ if and only if $c^{(i)} (u) = c^{(i)} (v)$.
\end{Rem}
We will now bridge the concepts of universal coverings and unfolding trees, passing through the colour refinement algorithm. 

\begin{Lem}\label{lem:unf_cov_eq}
    Let $\mathbf{G}$ and $\mathbf{H}$ be connected graphs and $x,y$ be nodes of  $\mathbf{G}$ and $\mathbf{H}$, respectively. Then $\mathbf{T}_x^t \cong \mathbf{T}_y^t$ if and only if $\mathbf{U}_x^t \cong \mathbf{W}_y^t$ for all $i$.
\end{Lem}

\begin{proof}
    Coupling Lemma \ref{Lem_first} and \ref{equiv_universal}, we straightforwardly obtain the result.
\end{proof}

The established bijection leads us directly to the proof of Theorem \ref{th:treeDepth}.
\begin{proof}{-- Proof of Theorem \ref{th:treeDepth}}.
The proof is based on the   reasoning adopted for Lemma 2.4 and Theorem 3.2 
in \cite{krebs2015universal}. Actually, such a lemma and theorem are similar to 
points (1) and (2) of  Theorem~\ref{th:treeDepth} and differ only because the results
in \cite{krebs2015universal} are about universal covers, whereas our points are about unfolding trees.
However, Lemma \ref{lem:unf_cov_eq} shows that universal covers and unfolding trees produce the same isomorphism on nodes.
\end{proof}

\section*{Proof of Theorem \ref{f_unfold} }
\begin{proof}
    It follows directly from the combination of Theorem 1 in~\cite{Comp_GNN} and Theorem \ref{th:treeDepth}. 
\end{proof}

\section*{Proof of Theorem \protect{\ref{main}} (Approximation by GNNs) }

First, we  need a preliminary lemma, for the the proof of which we refer to \cite{Comp_GNN}.
Intuitively, this lemma suggests that a graph domain with continuous features can be partitioned into 
small subsets so that the features of the graphs are almost constant in each partition. Moreover, a finite number of partitions is sufficient, in probability, to cover a large part of the domain.

\begin{Lem}{(Lemma 1 in  \cite{Comp_GNN})} \label{hypercubes}
For any probability measure $P$ on $\mathcal{D}$, and any reals $\lambda$, $\delta$, where $0 < \lambda \leq 1$, $\delta \geq 0$, there exist a real $\Bar{b} >0$, which is independent of $\delta$, a set $\Bar{\mathcal{D}} \subseteq \mathcal{D}$, and a finite number of partitions $\Bar{\mathcal{D}_1}, \dots , \Bar{\mathcal{D}_l}$ of $\Bar{\mathcal{D}}$, where $\Bar{\mathcal{D}} = \mathcal{G}_i \times \{ v_i \}$, with $\mathcal{G}_i \subseteq \mathcal{G}$ and $v_i \in \mathcal{G}_i$, such that: 
\begin{enumerate}
    \item $P(\Bar{\mathcal{D}}) \geq 1- \lambda $ holds;
    \item for each $i$, all the graphs in $\mathcal{G}_i$ have the same structure, i.e., they differ only for the values of their labels;
    \item for each set $\Bar{\mathcal{D}_i}$, there exists a hypercube $\mathcal{H}_i \in \mathbb{R}^a$ such that $\ell_{\mathbf{G}} \in \mathcal{H}_i$ holds for any graph $\mathbf{G} \in \mathcal{G}_i$, where $\ell_{\mathbf{G}}$ denotes the vector obtained by stacking all the feature vectors of $\mathbf{G}$;
    \item for any two different sets $\mathcal{G}_i$, $\mathcal{G}_j$, $i \neq j$, their graphs have different structures or their hypercubes $\mathcal{H}_i$, $\mathcal{H}_j$ have a null intersection, i.e. $\mathcal{H}_i \bigcap \mathcal{H}_j = \emptyset$;
    \item for each $i$ and each pair of graphs $\mathbf{G}_1$, $\mathbf{G}_2 \in \mathcal{G}_i$, the inequality $\| \ell_{\mathbf{G}_1} - \ell_{\mathbf{G}_2} \|_{\infty} \leq \delta$ holds;
    \item for each graph $\mathbf{G} \in \Bar{\mathcal{D}}$, the inequality $\| \ell_{\mathbf{G}}\|_{\infty} \leq \Bar{b}$ holds.
\end{enumerate}
\end{Lem}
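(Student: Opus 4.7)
The plan is to build $\bar{\mathcal{D}}$ and its partition in three stages: first coarsen $\mathcal{D}$ by the isomorphism class of the node-distinguished graph, then trim each class down to a bounded label region using the tightness of $P$, and finally subdivide each bounded region by a uniform grid of $\delta$-hypercubes.

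For stage one, every graph in $\mathcal{G}$ has finitely many nodes, so there are only countably many isomorphism classes of pairs $(\text{graph structure}, \text{distinguished node})$. I would index them $s_1,s_2,\ldots$ and set $\mathcal{E}_j=\{(\mathbf{G},v)\in\mathcal{D}:(\mathbf{G},v)\text{ has class }s_j\}$, giving a countable partition $\mathcal{D}=\bigsqcup_j \mathcal{E}_j$. After fixing an enumeration of the nodes within each class $s_j$, the map $(\mathbf{G},v)\mapsto \ell_{\mathbf{G}}$ embeds $\mathcal{E}_j$ as a subset of $\mathbb{R}^{a_j}$, where $a_j=\mathbf{L}\cdot|\mathbf{G}|$ is the total label dimension. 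Because $\sum_j P(\mathcal{E}_j)=1$, I can fix a finite $J$ with $\sum_{j\leq J}P(\mathcal{E}_j)\geq 1-\lambda/2$.

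For stage two, the restriction of $P$ to each $\mathcal{E}_j$ (with $j\leq J$) is a finite Borel measure on $\mathbb{R}^{a_j}$, hence inner regular. By monotone convergence on nested axis-aligned cubes there is a real $b_j>0$ with $P\!\left(\mathcal{E}_j\cap[-b_j,b_j]^{a_j}\right)\geq P(\mathcal{E}_j)-\lambda/(2J)$. Setting $\bar b=\max_{j\leq J} b_j$ yields the required global bound; crucially, $\bar b$ depends only on $P$, $\lambda$ and $J$, not on $\delta$, which is what Property~6 demands. For stage three I tile $[-\bar b,\bar b]^{a_j}$ by a finite family of half-open $\delta$-hypercubes $\mathcal{H}_{j,1},\ldots,\mathcal{H}_{j,N_j}$ aligned to a grid (half-open to guarantee pairwise disjointness, which is stronger than the ``null intersection'' condition of Property~4), define $\mathcal{G}_{j,k}=\{\mathbf{G}:(\mathbf{G},v_j)\in\mathcal{E}_j,\ \ell_{\mathbf{G}}\in\mathcal{H}_{j,k}\}$ and $\bar{\mathcal{D}}_{j,k}=\mathcal{G}_{j,k}\times\{v_j\}$, take $\bar{\mathcal{D}}$ to be the finite union of the non-empty cells, and re-index them as $\bar{\mathcal{D}}_1,\ldots,\bar{\mathcal{D}}_l$. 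Properties 2--6 then fall out of the construction: same structure within each cell, $\ell_{\mathbf{G}}$ inside a $\delta$-hypercube, disjoint hypercubes within each $j$ while different $j$'s are separated by structure, pairwise $\infty$-distance $\leq\delta$, and the global bound $\bar b$. Property 1 follows by summing the two truncation losses: $P(\bar{\mathcal{D}})\geq(1-\lambda/2)-J\cdot\lambda/(2J)=1-\lambda$.

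The main obstacle I anticipate is bookkeeping rather than deep analysis. The statement writes $\bar{\mathcal{D}}_i=\mathcal{G}_i\times\{v_i\}$ with a singleton node, so the coarsening in stage one must label classes by the pair (structure, distinguished node) rather than just the underlying structure: otherwise graphs with nontrivial automorphisms could contribute several distinguished nodes to the same structural class and break the singleton formulation, and the embedding into $\mathbb{R}^{a_j}$ used in stage two would no longer be well defined. Once the class label is taken to be this richer pair, the countability needed in stage one is preserved (the set of isomorphism classes of rooted finite graphs is still countable), the inner-regularity argument of stage two is applied to each such class separately, and the $\delta$-free control of $\bar b$ is exactly what makes the lemma genuinely useful in the downstream approximation argument.
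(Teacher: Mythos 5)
The paper never proves this lemma: it is imported as Lemma~1 of \cite{Comp_GNN}, and the authors explicitly defer to that reference, so your write-up is a reconstruction rather than something to be checked against an in-paper argument. That said, your reconstruction is correct and follows the natural (and, as far as the cited source goes, essentially standard) route: partition $\mathcal{D}$ into the countably many classes indexed by the pair (graph structure, distinguished node), retain finitely many classes capturing mass $1-\lambda/2$, use continuity from below of the finite Borel measure on each class's label space $\mathbb{R}^{a_j}$ to extract a single bound $\bar b$ depending only on $P$, $\lambda$ and $J$ (hence independent of $\delta$) at a further cost of $\lambda/2$, and tile $[-\bar b,\bar b]^{a_j}$ by finitely many side-$\delta$ cubes. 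Your observation that the classes must carry the distinguished node, not just the structure, so that each cell has the form $\mathcal{G}_i\times\{v_i\}$, is exactly the right piece of bookkeeping, and the mass accounting $P(\bar{\mathcal{D}})\ge 1-\lambda$ is correct.

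Two caveats are worth recording. First, your tiling requires $\delta>0$, while the statement as transcribed allows $\delta\ge 0$; for $\delta=0$ and a label distribution with a continuous marginal no finite partition can satisfy properties 1 and 5 simultaneously, so this is a defect of the statement rather than of your proof, but you should state the assumption $\delta>0$. Second, the paper's downstream use of this lemma (in the proof of Lemma~\ref{reducLemma}) appeals to intervals $I_i^{\bar b,\bar\eta}$ that ``do not overlap and are not contiguous,'' i.e.\ are separated by gaps, so that the integer coding $\theta$ extends to a continuous function on all of $\mathbb{R}$. Your half-open adjacent grid cubes satisfy the lemma as stated (empty intersection) but not this stronger separation property. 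It can be recovered by shrinking each cube to side $\delta-\epsilon'$ and charging the mass of the discarded frames to the $\lambda$ budget (choosing the grid offset, e.g.\ by averaging over translations, so the frames carry arbitrarily little mass); without some such adjustment your construction proves the lemma but would not directly support the continuity argument the paper later builds on it.
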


By adopting an argument similar to that proposed in \cite{Comp_GNN}, it is proved that Theorem \ref{main} is equivalent to the following Theorem \ref{reduct}, where the domain contains a finite number of graphs and the features are integers.

\begin{Thm} \label{reduct}
For any finite set of patterns $\{ ( \mathbf{G}_i , v_i) |\  \mathbf{G}_i \in \mathcal{G}, v_i \in \mathit{N}, 1 \leq i \leq n \}$, with $ r= \max\limits_{\mathbf{G}_i} |(\mathbf{G}_i)|$ and with graphs having integer features, 
for any function $\tau : \mathcal{D} \rightarrow \mathbb{R}^m$, which preserves the unfolding equivalence, and for any real $\varepsilon >0$, there exist continuously differentiable functions  $\text{AGGREGATE}^{(k)}$, $\text{COMBINE}^{(k)}$, $\forall k \leq r+1$, s.t.
\begin{align*}
\mathbf{h}^k_v & = \text{COMBINE}^{(k)}\big(\mathbf{h}^{k-1}_{v}, & \\
& \text{AGGREGATE}^{(k)}\ \{\mathbf{h}^{k-1}_{u}, \; u \in ne[v]\}\big)&
\end{align*}
and a function
$\text{READOUT}$, with feature dimension $m=1$, i.e, $\mathbf{h}_v^k\in\mathbb{R}$, so that the function
$ \varphi$  (realized by the GNN), computed after $r+1$ steps,
satisfies the condition
\begin{equation}\label{mainIntEq}
|\tau (\mathbf{G}_i, v_i) - \varphi(\mathbf{G}_i, v_i)| \leq \varepsilon
\end{equation}
for any $i$, $1 \leq i \leq n$.
\end{Thm}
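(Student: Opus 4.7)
The plan is to construct a GNN whose hidden feature $h_v^{r+1}$ is a scalar that injectively encodes the depth-$(r+1)$ unfolding tree of $v$, so that a suitably chosen READOUT can recover $\tau(\mathbf{G}_i,v_i)$ exactly. Theorem~\ref{diam} guarantees that on the given patterns unfolding equivalence is already decided by the trees $\mathbf{T}_v^{r+1}$, so $r+1$ iterations suffice. Finiteness is the essential structural ingredient: because there are finitely many patterns and labels are integers, for every $k\le r+1$ the set $\mathcal{T}_k$ of unfolding trees $\mathbf{T}_v^k$ arising from a node $v$ of some $\mathbf{G}_i$ is finite, and so is the set $\mathcal{M}_k$ of neighbour-multisets of depth-$k$ trees appearing in the data.

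I would then define, by induction on $k$, scalar injective encodings $e_k:\mathcal{T}_k\to\mathbb{R}$ that are realised by the GNN, in the sense that $h_u^{k}=e_{k}(\mathbf{T}_u^{k})$ for every relevant $u$. The base case is $e_0(\mathbf{T}_v^0)=\ell_v$, which is injective because distinct depth-$0$ trees differ in their integer label. For the inductive step, let $X_{k-1}=\{x_1<\cdots<x_p\}$ be the finite image of $e_{k-1}$ and let $M$ be a strict upper bound on the node degrees in the dataset. By Lagrange interpolation there is a polynomial (hence $C^\infty$) function $\psi_k:\mathbb{R}\to\mathbb{R}$ with $\psi_k(x_j)=(M+1)^{j}$ for $j=1,\dots,p$. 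Set
\begin{equation*}
\text{AGGREGATE}^{(k)}\{h_1,\ldots,h_s\}\;=\;\sum_{i=1}^s \psi_k(h_i).
\end{equation*}
On any multiset in $\mathcal{M}_{k-1}$, which contains at most $M$ elements of $X_{k-1}$, this sum reads off the base-$(M+1)$ representation of the multiplicity vector, so the restriction of AGGREGATE$^{(k)}$ to $\mathcal{M}_{k-1}$ is injective. Next choose COMBINE$^{(k)}:\mathbb{R}^2\to\mathbb{R}$ to be a $C^\infty$ function (again produced by polynomial interpolation) sending the finite set of pairs $\bigl(e_{k-1}(\mathbf{T}_v^{k-1}),\ \text{AGGREGATE}^{(k)}\{h_u^{k-1}\}_{u\in ne[v]}\bigr)$ realised by the data to pairwise distinct prescribed reals; define $e_k(\mathbf{T}_v^k)$ to be the corresponding output. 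Since this pair determines both the root label of $\mathbf{T}_v^k$ (via $\mathbf{T}_v^{k-1}$) and its multiset of depth-$(k-1)$ children (via the inductive injectivity of $e_{k-1}$), $e_k$ is injective on $\mathcal{T}_k$.

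After $r+1$ iterations, $h_{v_i}^{r+1}=e_{r+1}(\mathbf{T}_{v_i}^{r+1})$ depends only on the unfolding-equivalence class of $v_i$. Because $\tau\in\mathcal{F}(\mathcal{D})$, the rule
\begin{equation*}
\text{READOUT}\bigl(e_{r+1}(\mathbf{T}_{v_i}^{r+1})\bigr)\;=\;\tau(\mathbf{G}_i,v_i)
\end{equation*}
is well-defined (patterns sharing an unfolding class share a $\tau$ value) and may be extended arbitrarily off the finite realised set; hence $\varphi(\mathbf{G}_i,v_i)=\tau(\mathbf{G}_i,v_i)$ exactly and (\ref{mainIntEq}) holds with error $0\le\varepsilon$.

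The main obstacle is to reconcile the feature dimension restriction $m=1$ with the continuous differentiability required of AGGREGATE$^{(k)}$ and COMBINE$^{(k)}$: one has to engineer scalar maps that are smooth on all of $\mathbb{R}$ yet behave like an injective tree/multiset encoding on the relevant finite subsets. The base-$(M+1)$ trick combined with polynomial interpolation resolves this tension, exploiting at each layer that only finitely many inputs need to be controlled; a minor caveat is that the image $X_k$ grows with $k$, so $\psi_k$ and COMBINE$^{(k)}$ must be redesigned at every layer, which costs nothing because the recursion is carried out only $r+1$ times.
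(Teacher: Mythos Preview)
Your proposal is correct and follows essentially the same approach as the paper: both encode the depth-$(r+1)$ unfolding tree injectively into a scalar hidden feature by induction on $k$, exploit finiteness of the pattern set to make the layer maps smooth, and let READOUT decode the final encoding to recover $\tau$ exactly. Your base-$(M+1)$ multiset trick and Lagrange interpolation are simply a concrete instantiation of what the paper does abstractly---it defines an injection $\triangledown$ from trees to $\mathbb{R}$ and then argues that, since only finitely many input values occur, the required AGGREGATE/COMBINE maps can be extended to continuously differentiable functions on all of $\mathbb{R}$.
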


The equivalence is formally proved by the following lemma.

\begin{Lem}\label{reducLemma}
Theorem \ref{main} holds if and only if Theorem \ref{reduct} holds.
\end{Lem}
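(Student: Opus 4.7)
The plan is to prove both implications separately. The backward direction is the substantial one; the forward direction is a short probabilistic argument.

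\textbf{Forward direction} (Theorem~\ref{main} $\Rightarrow$ Theorem~\ref{reduct}). Given the finite set of patterns $\{(\mathbf{G}_i,v_i)\}_{i=1}^n$ with integer features, I would take $\mathcal{D}$ to consist of exactly these pairs and let $P$ be the uniform measure on them, $P(\{(\mathbf{G}_i,v_i)\})=1/n$. Any measurable function $\tau$ defined on this finite set automatically preserves unfolding equivalence (extending it, if needed, to be constant on each unfolding class, which is possible because the set is finite). Applying Theorem~\ref{main} with $\lambda < 1/n$ produces a GNN $\varphi$ with $P(\|\tau-\varphi\|\le\varepsilon)\ge 1-\lambda$. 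If even a single $(\mathbf{G}_i,v_i)$ violated $|\tau(\mathbf{G}_i,v_i)-\varphi(\mathbf{G}_i,v_i)|\le\varepsilon$, the ``bad'' set would have probability at least $1/n>\lambda$, a contradiction. So the bound holds at every pattern, which is precisely the thesis of Theorem~\ref{reduct}.

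\textbf{Backward direction} (Theorem~\ref{reduct} $\Rightarrow$ Theorem~\ref{main}). Starting from a measurable $\tau\in\mathcal{F}(\mathcal{D})$ and the parameters $\varepsilon,\lambda$, I would invoke Lemma~\ref{hypercubes} with a $\delta>0$ to be fixed later, obtaining a measurable subset $\bar{\mathcal{D}}\subseteq\mathcal{D}$ with $P(\bar{\mathcal{D}})\ge 1-\lambda$ and a finite family of partitions $\bar{\mathcal{D}}_1,\dots,\bar{\mathcal{D}}_l$ such that within each $\bar{\mathcal{D}}_i$ all graphs share the same structure, have features confined to the hypercube $\mathcal{H}_i$, and differ pairwise in labels by at most $\delta$ in $\ell_\infty$-norm; moreover the hypercubes of distinct partitions are either carried by different graph structures or are disjoint. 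Pick a representative $(\mathbf{G}_i,v_i)$ from each $\bar{\mathcal{D}}_i$ and discretize its labels by multiplying by a sufficiently large integer $s$ and rounding, producing integer-labeled representatives $(\mathbf{G}_i^{\ast},v_i)$; by taking $s$ large enough the disjointness condition in Lemma~\ref{hypercubes} still yields pairwise distinct representatives on the integer grid.

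Next, I would define $\tau^{\ast}(\mathbf{G}_i^{\ast},v_i)=\tau(\mathbf{G}_i,v_i)$ on the finite discrete set (and extend it trivially to preserve unfolding equivalence there, collapsing any accidental coincidences) and apply Theorem~\ref{reduct} to obtain a GNN with continuously differentiable $\text{COMBINE}^{(k)},\text{AGGREGATE}^{(k)}$ and a $\text{READOUT}$ approximating $\tau^{\ast}$ within $\varepsilon/2$ at every discrete representative after $r+1$ iterations. Finally, I would compose this GNN with a continuously differentiable pre-encoder that rescales and rounds input labels onto the integer grid, or, equivalently, exploit uniform continuity of the constructed $\text{COMBINE}^{(k)},\text{AGGREGATE}^{(k)},\text{READOUT}$ on the compact set prescribed by Lemma~\ref{hypercubes}(6). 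Choosing $\delta$ small enough that the total propagated perturbation through $r+1$ compositions stays below $\varepsilon/2$ gives a global error $\le\varepsilon$ on $\bar{\mathcal{D}}$, hence with probability at least $1-\lambda$ on $\mathcal{D}$.

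\textbf{Main obstacle.} The delicate point is the backward direction: the representatives of different partitions must remain separable after discretization (so that $\tau^{\ast}$ is a well-defined function respecting unfolding equivalence on the integer-labeled set), while simultaneously the discretization gap $\delta$ must be small enough that continuity of the GNN's components closes the loop between the discrete representatives and the actual continuous-featured inputs. Balancing these two requirements, together with the propagation of the perturbation through $r+1$ applications of the transition function, is the technical core of the argument, and closely mirrors the strategy developed for the OGNN model in~\cite{Comp_GNN}.
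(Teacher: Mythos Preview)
Your forward direction is correct and, in fact, more explicit than the paper, which simply remarks that Theorem~\ref{main} is more general than Theorem~\ref{reduct} and calls the implication straightforward.

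The backward direction, however, has a genuine gap. In both of your options you end up with a GNN $\varphi$ satisfying $|\varphi(\mathbf{G},v)-\tau(\mathbf{G}_i,v_i)|\le\varepsilon/2$ for every $(\mathbf{G},v)\in\bar{\mathcal{D}}_i$, but the quantity you need to bound is $|\varphi(\mathbf{G},v)-\tau(\mathbf{G},v)|$. Closing this requires $|\tau(\mathbf{G},v)-\tau(\mathbf{G}_i,v_i)|\le\varepsilon/2$ whenever the labels of $\mathbf{G}$ and $\mathbf{G}_i$ differ by at most~$\delta$. Since $\tau$ is only assumed \emph{measurable}, no choice of $\delta$ guarantees this; a measurable function can oscillate arbitrarily on every hypercube. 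The paper deals with this by first invoking the fact that any measurable function can be approximated in probability by a continuous one (a Lusin-type argument), so that without loss of generality $\tau$ is equi-continuous on the bounded region $\mathcal{M}$ furnished by Lemma~\ref{hypercubes}(6). The parameter $\bar{\delta}$ is then chosen from the modulus of equi-continuity of $\tau$, \emph{before} the partitions are fixed and before the GNN is built.

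Your option~2 has an additional circularity: you propose to pick $\delta$ small enough that the perturbation propagated through $r+1$ applications of the GNN's transition function stays below $\varepsilon/2$. But the GNN comes from Theorem~\ref{reduct} applied to the discrete representatives, which themselves depend on the partitions from Lemma~\ref{hypercubes}, which depend on $\delta$. Thus the GNN's modulus of continuity is a function of $\delta$, and you cannot choose $\delta$ afterwards. The paper sidesteps this entirely: once $\bar{\delta}$ is fixed via the continuity of $\tau$, it builds an encoder $\theta$ that collapses each interval $I_i^{\bar{b},\bar{\eta}}$ to a single integer index, so every graph in a partition is mapped \emph{exactly} to the same integer-labeled representative, and no continuity of the GNN is invoked at all. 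Your option~1 (the pre-encoder) is essentially this idea, but it still does not repair the missing continuity-of-$\tau$ step.
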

\begin{proof} Although the proof is quite identical to that contained in \cite{Comp_GNN}, we report it
here with the new notation.

Theorem \ref{main} is more general than Theorem \ref{reduct}, which makes this implication straightforward. Suppose instead that Theorem \ref{reduct} holds and show that this implies Theorem \ref{main}.
Let us apply Lemma \ref{hypercubes} with values for $P$ and $\lambda$ equal to the corresponding values of Theorem \ref{main}, being $\delta$ any positive real number. It follows that there is a real $\Bar{b}$ and a subset $\Bar{\mathcal{D}}$ of $\mathcal{D}$ s.t. $P(\Bar{\mathcal{D}}) > 1-\lambda$.
Let $\mathcal{M}$ be the subset of $\mathcal{D}$ that contains only the graphs $\mathbf{G}$ satisfying $\| \ell_{\mathbf{G}} \|_{\infty} \leq \Bar{b}$. Note that, since $\Bar{b}$ is independent of $\delta$, then $\Bar{\mathcal{D}} \subset \mathcal{M}$ for any $\delta$.
Since $\tau$ is integrable, there exists a continuous function which approximates $\tau$, in probability, up to any degree of precision. Thus, without loss of generality, we can assume that $\tau$ is equi--continuous on $\mathcal{M}$. By definition of equi--continuity, a real $\Bar{\delta} > 0 $ exists such that

\begin{equation}\label{cont_1}
    | \tau(\mathbf{G}_1,v) - \tau(\mathbf{G}_2,v) | \leq \frac{\varepsilon}{2}
\end{equation}
holds for any node $v$ and for any pair of graphs $\mathbf{G}_1, \mathbf{G}_2$ having the same structure and satisfying $\| \ell_{\mathbf{G}_1} - \ell_{\mathbf{G}_2} \|_{\infty} \leq \Bar{\delta}$.

Let us apply Lemma \ref{hypercubes} again, where, now, the $\delta$ of the hypothesis is set to $\Bar{\delta}$, i.e. $\delta= \Bar{\delta}$. From then on, $\Bar{\mathcal{D}} = \mathcal{G}_i \times \{ v_i \}$, $1 \leq i \leq n$, represents the set obtained by the new application of Lemma \ref{hypercubes} and  $I_i^{\Bar{b}, \Bar{\eta}}$ , $1 \leq i \leq 2d$, denote the corresponding intervals defined in the proof of the same lemma.
Let $\theta: \mathbb{R} \rightarrow \mathbb{Z}$ be a function that encodes reals into integers as follows: for any $i$ and any $z \in I_{i}^{\Bar{b}, \Bar{\eta}}$, $\theta(z) = i$. Thus, $\theta$ assigns to all the values of an interval $I_{i}^{\Bar{b}, \Bar{\eta}}$ the index $i$ of the interval itself. Since the intervals do not overlap and are not contiguous, $\theta$ can be continuously extended to the entire $ \mathbb{R}$. Moreover, $\theta$ can be extended also to vectors, being $\theta(\mathbf{Z})$ the vector of integers obtained by encoding all the components of $\mathbf{Z}$. Finally, let $\Theta: \mathcal{G}\rightarrow \mathcal{G}$ represent the function that transforms each graph by replacing all the feature labels with their coding, i.e. $\mathbf{L}_{\Theta(\mathbf{G})} = \theta(\mathbf{L}_{\mathbf{G}})$. Let $\Bar{\mathbf{G}_1}, \dots, \Bar{\mathbf{G}_v} $ be graphs, each one extracted from a different set $\mathcal{G}_i$. Note that, according to points 3, 4, 5 of Lemma \ref{hypercubes}, $\Theta$ produces an encoding of the sets $\mathcal{G}_i$. More precisely, for any two graphs $\mathbf{G}_1$ and $\mathbf{G}_2$ of $\Bar{\mathcal{D}}$, we have $\Theta(\mathbf{G}_1) = \Theta(\mathbf{G}_2)$ if the graphs belong to the same set, i.e., $\mathbf{G}_1, \mathbf{G}_2 \in \mathcal{G}_i$, while $\Theta(\mathbf{G}_1) \neq \Theta(\mathbf{G}_2)$ otherwise. Thus, we can define a decoding function $\Gamma$ s.t. $\Gamma(\Theta(\Bar{\mathbf{G}_i}), v_i)=(\Bar{\mathbf{\mathbf{G}_i}}, v_i)$, $1 \leq i \leq n$. 

Consider, now, the problem of approximating $\tau \circ \Gamma$ on the set $(\Theta(\Bar{\mathbf{G}_1}),v_1), \dots , ( \Theta(\Bar{\mathbf{G}_n}),v_n)$. Theorem \ref{reduct} can be applied to such a set, because it contains a finite number of graphs with integer labels. Therefore, there exists a GNN that implements a function $\Bar{\varphi}$ s.t., for each $i$, 
\begin{equation}\label{cont_2}
|\tau(\Gamma(\Theta(\Bar{\mathbf{G}_i}), v_i))- \Bar{\varphi}(\Theta(\Bar{\mathbf{G}_i}), v_i)| \leq \frac{\varepsilon}{2}
\end{equation}
However, this means that there is also another GNN that produces the same result operating on the original
graphs $\mathbf{G}_i$, namely a GNN for which
\begin{equation}\label{cont_3}
\varphi(\mathbf{G}_i, v_i) = \Bar{\varphi}( \Theta( \Bar{\mathbf{G}_i}), v_i)\,
\end{equation}
holds. Actually, the graphs ${\mathbf{G}_i}$ and $\Bar{\mathbf{G}_i}$ are equal except that the former
 has the coding of the feature labels attached to the nodes, while the latter contains the whole feature labels.
Thus, the GNN that operates on $\Bar{\mathbf{G}_i}$ is that suggested by Theorem \ref{reduct}, except that
 $\overline{\text{AGGREGATE}}^{(0)}$ preliminary creates a coding of $\theta(\ell_v)$.

Putting together the above equality with Eqs.
(\ref{cont_1}) and (\ref{cont_2}), it immediately follows that, for any $(\mathbf{G},v) \in \bar{\mathcal{D}_i}$,

\begin{equation*}
    |\tau(\mathbf{G}, v) - \varphi (\mathbf{G},v)| = 
\end{equation*}

\begin{equation*}
    = |\tau(\mathbf{G},v) - \tau (\Bar{\mathbf{G}_i},v) + \tau (\Bar{\mathbf{G}_i},v) - \varphi (\mathbf{G},v)|
\end{equation*}

\begin{equation*}
\leq | \tau(\Bar{\mathbf{G}_i},v)- \varphi(\mathbf{G},v)| +\frac{\varepsilon}{2}
\end{equation*}

\begin{equation*}
    = |\tau(\Gamma(\Theta(\Bar{\mathbf{G}_i}),v))- \Bar{\varphi}(\Theta(\Bar{\mathbf{G}_i}),v)| + \frac{\varepsilon}{2} \leq \varepsilon
\end{equation*}
Thus, the GNN described by Eq. (\ref{cont_3}) satisfies $|\tau(\mathbf{G},v) - \varphi (\mathbf{G},v) | \leq \varepsilon$ in the restricted domain $\bar{\mathcal{D}}$.
Since $P(\bar{\mathcal{D}}) \geq 1- \lambda$, we have:
\begin{equation*}
P( \| \tau(\mathbf{G},v)- \varphi( \mathbf{G},v) \| \leq \varepsilon) \geq 1- \lambda
\end{equation*}
which proves the lemma.

\end{proof}

\noindent
Now, we can  proceed to prove Theorem \ref{reduct}.

\begin{proof}[Proof of Theorem \ref{reduct}]
For the sake of simplicity, the theorem will be proved assuming $n=1$, i.e. $\tau(\mathbf{G},v) \in \mathbb{R}$. However, the result can be easily extended to the general case when $\tau(\mathbf{G},v) \in \mathbb{R}^n$. Indeed, in this case, the GNN that satisfies the theorem can be defined by stacking $n$ GNNs, each one approximating a component of $\tau(\mathbf{G},v)$.

According to Theorem \ref{f_unfold}, there exists a function $\kappa$ s.t. $\tau(\mathbf{G},v) = \kappa (\mathbf{T}_v)$.  Therefore,  an unfolding tree of depth $2r-1$, where $r$ is the maximum number of nodes in the graph domain, is enough to store the graph information, so that $\kappa$ can be designed to satisfy
$\tau(\mathbf{G},v) = \kappa (\mathbf{T}_v)= \kappa (\mathbf{T}_v^{2r-1})$; moreover, according to Theorem \ref{th:treeDepth}, the depth of the truncated unfolding tree is enough to respect the unfolding equivalence over all the nodes of every graph in the domain. 
Consequently, the main idea of the proof consists in designing a GNN that is able to encode the unfolding tree into the node features, i.e., for each
node $v$, we want to have $\mathbf{h}_v = \triangledown (\mathbf{T}_v^{2r-1})$, where $\triangledown$ is an encoding function that maps trees into real numbers.  More precisely, the encodings are constructed recursively  by  $\text{AGGREGATE}^{(k)}$ and $\text{COMBINE}^{(k)}$
functions using the neighbourhood information. After $t$ steps, the node features contain the encoding of the unfolding
tree $\triangledown (\mathbf{T}_v^t)$ of depth $t$. Then, after a number of steps $\bar{t}$ larger than the number of nodes of the graph, the GNN, by the READOUT function, can
produce the desired output $\kappa (\mathbf{T}_v^{n})$.

Accordingly, the theorem can be proved provided that we can implement the above mentioned procedure, which means that there exist
appropriate functions $\triangledown$, $\text{AGGREGATE}^{(k)}$, $\text{COMBINE}^{(k)}$ and READOUT. The existence of the READOUT function is obvious, since, given that unfolding trees can be encoded in node features, READOUT has just to decode the representation and compute the target output. Then, let use  focus on the other functions. They will be defined in two steps. Initially, $\text{AGGREGATE}^{(k)}$, $\text{COMBINE}^{(k)}$, and $\text{READOUT}$ will be defined without taking into account that they have to be continuously differentiable. Later,
this farther constraint will be considered.

\vspace{5pt}
\noindent 
\emph{The coding function $\triangledown$}\\
Let  $\triangledown$ be a composition of any two  injective functions $\alpha$ and $\beta$, $\alpha \circ \beta$, with the properties described in the following. 
\begin{itemize}
\item $\alpha$ is an injective function from the domain of the unfolding trees $\mathcal{T}^{N}$, calculated 
on the nodes of the graph $\mathbf{G}_i$,  to the Cartesian product $\mathbb{N} \times \mathbb{N}^P \times \mathbb{Z}^{\ell_{\mathbf{V}}}= \mathbb{N}^{P+1} \times \mathbb{Z}^{\ell_{\mathbf{V}}} $,  where $N$ is the number of nodes of the graph and $P$ is the maximum number of nodes a tree could have.

Intuitively, in the Cartesian product, $\mathbb{N}$ represents the tree structure, $\mathbb{N}^P$ denotes the node numbering, while, for each node, an integer vector $\in \mathbb{Z}^{\ell_{\mathbf{V}}}$ is used to encode the node features. 
Note that $\alpha$ exists and is injective, since the maximum information contained in an unfolding tree
is given by the union of all its node features and all its structural information, which is exactly equal to the codomain size of $\alpha$.
\item $\beta$ is an injective function from $\mathbb{N}^{P+1} \times \mathbb{Z}^{\ell_{\mathbf{V}}}$ to $\mathbb{R}$, whose existence is guaranteed by  the cardinality theory, since the two sets have the same cardinality. 
\end{itemize}
Since $\alpha$ and $\beta$ are injective, also the existence and the injectiveness of
 $\triangledown$ are ensured. 
 
 

 \vspace{7pt}
 \noindent
 \emph{Functions $\text{AGGREGATE}^{(k)}$ and $\text{COMBINE}^{(k)}$ }\\
Functions  $\text{AGGREGATE}^{(k)}$ and $\text{COMBINE}^{(k)}$ must satisfy 
\begin{equation*}
 \triangledown(\mathbf{T}_v^t)= \mathbf{h}_v^t= 
\end{equation*} 
\begin{equation*}
\begin{array}{lc}
 \text{COMBINE}^{(k)} \big(\mathbf{h}_v^{t-1}, & \\
 \text{AGGREGATE}^{(k)}\{\mathbf{h}^{t-1}_{u}, \; u \in ne[v]\}\big) &
 \end{array}
\end{equation*}
\begin{equation*}
\begin{array}{lc}
= \text{COMBINE}^{(k)} \big(\triangledown(\mathbf{T}_v^{t-1}), & \\ \text{AGGREGATE}^{(k)}\{\triangledown(\mathbf{T}^{t-1}_{u}), \; u \in ne[v]\}\big) &
\end{array}
\end{equation*}
\noindent
$\forall k \leq N$, where $N$ is the number of nodes. In a simple solution, $\text{AGGREGATE}^{(k)}$ decodes the trees of the neighbour  $\mathbf{T}^{t-1}_{u}$ of $v$ and stores them into a data structure
to be accessed by $\text{COMBINE}^{(k)}$. For example, the trees can be collected into the coding of a new  tree, i.e., $
\text{AGGREGATE}^{(k)}({\triangledown}(\mathbf{T}^{t-1}_{u} ), {u \in ne[v]})= {\triangledown}(\bigcup_{u \in ne[v]} {\triangledown}^{-1}(\triangledown(\mathbf{T}^{t-1}_{u})))$, 
where  $\bigcup_{u \in ne[v]} $ denotes an operator that  constructs a tree, with a  root having void features, from a set of sub--trees (see Figure  \ref{fig:union}). Then, $\text{COMBINE}^{(k)}$ assigns the correct features to the root by extracting them from  $\mathbf{T}^{t-1}_{v} $, i.e.,
\begin{equation*}
\begin{array}{lc}
\text{COMBINE}^{(k)}({\triangledown}(\mathbf{T}^{t-1}_{v}),b)  = &  \\
 {\triangledown}( \text{ATTACH}({\triangledown}^{-1}({\triangledown}(\mathbf{T}^{t-1}_{v})) ,{\triangledown}^{-1}(b)))&
 \end{array}
\end{equation*}

where ATTACH is an operator that  construct a tree following the procedure depicted in Figure \ref{fig:union} and $b$ is the result of the $\text{AGGREGATE}^{(k)}$ function. 
\begin{figure}[ht]
 \includegraphics[width=.99\linewidth]{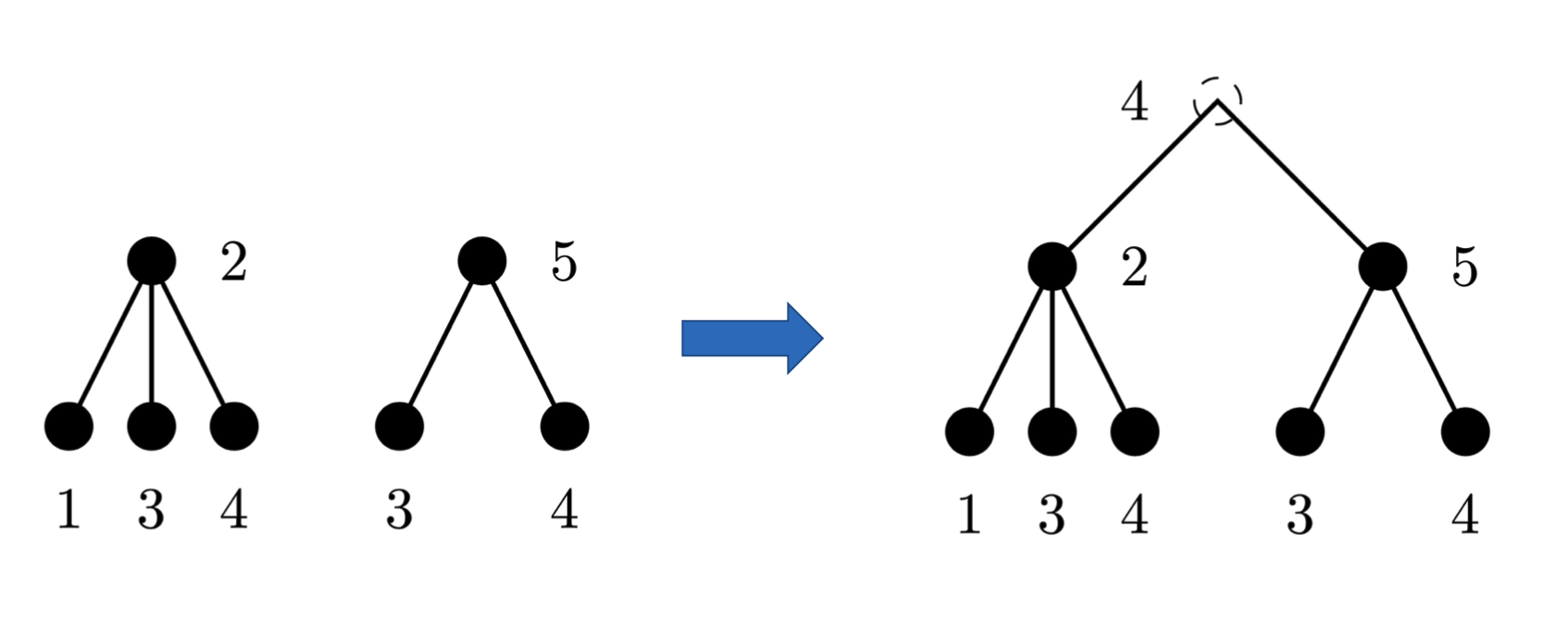}
\caption{The ATTACH operator on trees.}
\label{fig:union}
\end{figure}

Unfortunately, with this definition, $\text{AGGREGATE}^{(k)}$, $\text{COMBINE}^{(k)}$, and $\text{READOUT}$ may not be differentiable. Nevertheless, Eq.~(\ref{mainIntEq}) has to be satisfied only for a finite number of graphs,
namely $\mathbf{G}_i$. 
Thus, we can specify other functions $\overline{\text{AGGREGATE}}^{(k)}$, $\overline{\text{COMBINE}}^{(k)}$, and $\overline{\text{READOUT}}$, which  produce exactly the same computations when they are applied on the graphs 
$\mathbf{G}_i$, but that can be extended to the rest of their domain, so that they are continuously differentiable. Obviously, such an extension exists since those  functions are only constrained to interpolate a finite number of points\footnote{It is worth noting
that a similar extension can  also be applied to the coding function $\triangledown$ and to the
decoding function $\triangledown^{-1}$. In this case, the coding function is not injective on the whole domain, but only
on the graphs mentioned in the theorem.}.
\end{proof}

\section*{Proof sketch of Theorem \ref{nntheo}}
\begin{proof}

As in the proof of Theorem \ref{main}, without loss of generality, we will assume that the feature dimension is $m=1$. First of all, note that Theorem~\ref{main} ensures that we can find
$\overline{\text{COMBINE}}^{(k)}$, $\overline{\text{AGGREGATE}}^{(k)}$, $\forall k \leq N$, and $\overline{\text{READOUT}}$
so that, for the corresponding function  $\bar{\varphi}$ implemented by the GNN, 
\begin{equation}
\label{appMainEq}
P( \| \tau(\mathbf{G},v)- \bar{\varphi}( \mathbf{G},v) \| \leq \varepsilon/2) \geq 1- \lambda
\end{equation}
 holds. Let us consider the corresponding transition function $\bar{f}$, defined by 
\begin{equation*}
\begin{array}{lc}
\bar{f}^k(\mathbf{h}^{k-1}_{v},\{\mathbf{h}^{k-1}_{u}, u \in ne[v]\}) = &\\
=\overline{\text{COMBINE}}^{(k)}\big(\mathbf{h}^{k-1}_{v}, & \\
\overline{\text{AGGREGATE}}^{(k)}\{\mathbf{h}^{k-1}_{u}, u \in ne[v]\}\big) &
\end{array}
\end{equation*}
Since $\overline{\text{COMBINE}}^{(k)}$ and $\overline{\text{AGGREGATE}}^{(k)}$ 
are continuously differentiable, $\bar{f}^k$ is continuously differentiable. 
Considering that the theorem has to hold only in probability, we can also assume that the domain is bounded, so that
 $\bar{f}^k$ is bounded and has a bounded Jacobian. Let $B$ be a bound on the Jacobian/derivative of $\bar{f}^k$ for any $k$ and any input. The same argument can also be applied to the function
 $\overline{\text{READOUT}}$, which is continuously 
differentiable w.r.t. its input and can be assumed to have a bounded Jacobian/derivative. Let us assume that $B$
is also a bound for the Jacobian/derivative of $\overline{\text{READOUT}}$.
Moreover, let $\text{COMBINE}^{(k)}_w$ and $\text{AGGREGATE}^{(k)}_w$ be functions implemented by universal neural network that approximate $\overline{\text{COMBINE}}^{(k)}$, $\overline{\text{AGGREGATE}}^{(k)}$, $\forall k \leq r$, respectively,  and  such that
\begin{equation*} 
\begin{array}{lc}
f_w^k(\mathbf{h}^{k-1}_{v},\{\mathbf{h}^{k-1}_{u}, \; u \in ne[v]\}) = & \\ 
=\text{COMBINE}_w^{(k)}\big(\mathbf{h}^{k-1}_{v}, & \\
\text{AGGREGATE}^{(k)}_w\{\mathbf{h}^{k-1}_{u},  u \in ne[v]\}\big)\,. &
\end{array}
\end{equation*}
and let  us assume that
\begin{equation}\label{nnerror}
\|\bar{f}^k -f_w^k\|_\infty\leq \eta
\end{equation}
holds for every $k$ and a $\eta>0$. Let $\text{READOUT}_w$ be the function implemented by a universal neural network that approximates $\overline{\text{READOUT}}$,  so that
$$
\|\overline{\text{READOUT}}-\text{READOUT}\|_\infty\leq \eta
$$
In the following, it will be shown that, when $\eta$ is sufficiently small, the  GNN implemented by the approximating neural
networks is sufficiently  close to the GNN of Theorem \ref{main} so that the thesis is proved.

Let $\bar{F}^k$, $F_w^k$ be the global transition functions of the GNNs
that are obtained by stacking all the $\bar{f}^k$ and $f_w^k$ for all the nodes of the input graph. The node features are computed at each step by
$
\bar{H}^k={\bar F}^k(\bar{H}^{k-1}),\quad H^k=F_w^k(H^{k-1})
$, 
where $\bar{H}^k$,$H^k$ denote the stacking of all the node features of the graph
obtained by the two transition functions, respectively.
Then, 
\begin{equation}\label{h1bound}
\|\bar{H^1} -H^1\|_\infty=\|\bar{F}^1(H^0) -F_w^1(H^0)\|_\infty\leq \eta N
\end{equation}
where $N=|\mathbf{G}|$ is number of nodes in the input graph. Moreover,  
\begin{eqnarray*}
\lefteqn{\|\bar{H^2} -H^2\|_\infty=}&& \\
&=&\|\bar{F}^2(\bar{H}^1) -F_w^2(H)\|_\infty \\
&=&  \|\bar{F}^2(\bar{H}^1)-\bar{F}^2(H^1)+ \bar{F}^2(H^1)-F_w^2(H^1)\|_\infty\\
&\leq&
 \|\bar{F}^2(\bar{H}^1)-\bar{F}^2(H^1)\|_{\infty} +\| \bar{F}^2(H^1)-F_w^2(H^1)\|_\infty\\
&\leq& \eta N B +\eta N=\eta N(B+1)\,.
\end{eqnarray*}
Here, $\|\bar{F}^2(\bar{H}^1)-\bar{F}^2(H^1)\|_{\infty}\leq\eta N B $ holds because of  Eq. (\ref{h1bound}),
which bounds the difference between $\bar{H}^1$ and $H^1$, and due to the fact that the Jacobian/derivative of $\bar{F}^2$ is bounded by $B$.
Moreover, $\| \bar{F}^2(H^1)-F_w^2(H^1)\|_\infty\leq \eta N$ holds by Eq. (\ref{nnerror}). 

The above reasoning can then be applied recursively to prove that
$$
\|\bar{H}^k -H_w^k\|_\infty\leq \eta N\sum_{i=0}^{k-1} B^i
$$

Since the output of the GNN is computed using the encoding at step $N$,
we have 
\begin{eqnarray*}
\lefteqn{\|\overline{\varphi}(\mathbf{G},v)-\varphi_w(\mathbf{G},v)\|_\infty=}&&\\
&=&\|\overline{\text{READOUT}}(\overline{H}^{N})-\text{READOUT}_w(H^{N})\|_\infty\\
&\leq & \eta N+B (\eta N\sum_{i=0}^N B^i)
\end{eqnarray*}
Finally, since we can consider the maximum number of nodes $N$ as bounded\footnote{
For the sake of simplicity, we skip over a very formal proof of this claim.
Intuitively, note that the theorem has to be proved  and
Lemma \ref{hypercubes}  clarifies that any graph domain can be covered in high probability by
a finite number of structures, which obviously have a bounded number of nodes.
}, then
we can find  a GNN based on neural networks so that  $\eta$ is  small enough to achieve 
$$
\|\overline{\varphi}(\mathbf{G},v)-\varphi_w(\mathbf{G},v)\|_\infty\leq \epsilon/2
$$
which, together with Eq. (\ref{appMainEq}), produces the bound of Theorem~\ref{main}.
\end{proof}





\nocite*
\bibliography{sn-bibliography}

\end{document}